\def\eqref#1{equation~\ref{#1}}
\def\1{\bm{1}}
\def\vpsi{{\bm{\psi}}}
\def\vx{{\bm{x}}}
\DeclareMathAlphabet{\mathsfit}{\encodingdefault}{\sfdefault}{m}{sl}
\SetMathAlphabet{\mathsfit}{bold}{\encodingdefault}{\sfdefault}{bx}{n}
\newcommand{\R}{\mathbb{R}}
\theoremstyle{plain}
\theoremstyle{definition}
\theoremstyle{remark}
\newcommand{\Psix}{\vpsi_{\mathbf{X}_B}}
\newcommand{\Psixaug}{\vpsi_{\mathbf{X}_B^+}}
\newcommand{\hatPsix}{\hat{\vpsi}_{\mathbf{X}_B}}
\edef\csname\next\endcsname{%
      \noexpand\DOTSI	
      \expandafter\noexpand\csname\next op\endcsname	
      \noexpand\ilimits@	
    }%
\@nx\else[{#1}]\fi}%
\@nx\else[{#1}]\fi\else\csname #2\@xa\endcsname\fi}%
\newcommand{\RN}[1]{%
    \textup{\lowercase\expandafter{\it \romannumeral#1}}%
}	
\newcommand{\specialcell}[2][c]{%
  \begin{tabular}[#1]{@{}c@{}}#2\end{tabular}}
\title{Neural Eigenfunctions Are Structured \\Representation Learners}
\author{Zhijie Deng\thanks{Equal contribution.}\\
{\small Shanghai Jiao Tong University}\\
\texttt{\small zhijied@sjtu.edu.cn}
\And 
Jiaxin Shi\footnotemark[1]\\
{\small Stanford University}\\
\texttt{\small jiaxins@stanford.edu}
\And 
Hao Zhang\\
{\small University of California, Berkeley}\\
\texttt{\small sjtu.haozhang@gmail.com}
\AND
Peng Cui\\
{\small Tsinghua University}\\
\texttt{\small xpeng.cui@gmail.com\;\;\;}
\And 
Cewu Lu\\
{\small Shanghai Jiao Tong University}\\
\texttt{\small lucewu@sjtu.edu.cn}
\And 
Jun Zhu\thanks{Corresponding author.}\\
{\small Tsinghua University}\\
\texttt{\small dcszj@tsinghua.edu.cn}
}
\newcommand\todo[1]{\ifthenelse{\equal{\showcomments}{yes}}{{\color{red} TODO: #1}}{\ignorespaces}}
\newcommand\hao[1]{\ifthenelse{\equal{\showcomments}{yes}}{{\color{blue} Hao: #1}}{\ignorespaces}}
\begin{document}

\maketitle

\begin{abstract}
This paper introduces a structured, adaptive-length deep representation called Neural Eigenmap. Unlike prior spectral methods such as Laplacian Eigenmap that operate in a nonparametric manner, Neural Eigenmap leverages NeuralEF~\citep{deng2022neuralef} to parametrically model eigenfunctions using a neural network. We show that, when the eigenfunction is derived from positive relations in a data augmentation setup, applying NeuralEF results in an objective function that resembles those of popular self-supervised learning methods, with an additional symmetry-breaking property that leads to \emph{structured} representations where features are ordered by importance. We demonstrate using such representations as adaptive-length codes in image retrieval systems. By truncation according to feature importance, our method requires up to $16\times$ shorter representation length than leading self-supervised learning ones to achieve similar retrieval performance. We further apply our method to graph data and report strong results on a node representation learning benchmark with more than one million nodes.
\end{abstract}

\section{Introduction}

Automatically learning representations from unlabelled data is a long-standing challenge in machine learning. 
Often, the motivation is to map data to a vector space where the geometric distance reflects semantic closeness. 
This enables, for example, retrieving semantically related information via finding nearest neighbors, or discovering concepts with clustering. 
One can also pass such representations as inputs to supervised learning procedures, which removes the need for feature engineering.

Traditionally, spectral methods that estimate the eigenfunctions of some integral operator (often induced by a data similarity metric) were widely used to learn representations from data~\citep{burges2010dimension}. 
Examples of such methods include Multidimensional Scaling~\citep{carroll1998multidimensional}, Laplacian Eigenmaps~\citep{belkin2003laplacian}, and Local Linear Embeddings~\citep{roweis2000nonlinear}. 
However, these approaches are less commonly employed today than deep representation learning methods that leverage deep generative models or a self-supervised training scheme~\citep{oord2018representation,radford2018improving,caron2020unsupervised,chen2020simple}. 

There are two primary reasons we believe that contribute to the lesser use of spectral methods today.
First, many spectral algorithms operate in a nonparametric manner, such as computing the eigendecomposition of a full similarity matrix between all data points. 
This makes them difficult to scale to large datasets.
Second, the performance of learned representations is highly dependent on the similarity metric used to construct the integral operator. 
However, picking an appropriate metric for high-dimensional data can itself be a very challenging problem. 

In this work, we revisit the approach of using eigenfunctions for representation learning. Unlike past efforts that estimated eigenfunctions in a nonparametric way, we take a different path by leveraging the NeuralEF method~\citep{deng2022neuralef} to parametrically approximate eigenfunctions. Specifically, a deep neural network is trained to approximate dominant eigenfunctions from large-scale data. 
This learned representation, which we term \emph{Neural Eigenmap}, inherits the principled theoretical motivation of eigenfunction-based representation learning while at the same time gains the flexibility and scalability advantages of deep learning methods. 

Our contributions are three-fold:
\begin{itemize}[leftmargin=*]
    \item 
    We uncover a formal connection between NeuralEF and self-supervised learning (SSL)---applying NeuralEF with a similarity metric derived from data augmentation~\citep{johnson2022contrastive} leads to an objective function that resembles popular self-supervised learning (SSL) methods while also exhibiting an additional symmetry-breaking property. This property enables learning structured representations ordered by feature importance. This ordered structure is lost in other SSL algorithms~\citep{haochen2021provable,balestriero2022contrastive,johnson2022contrastive} and gives Neural Eigenmap a key advantage in adaptively setting representation length for best quality-cost tradeoff. In image retrieval tasks,
    it uses up to 16 times shorter code length than SSL-based representations while achieving similar retrieval precision. 
    \item We show that, even in representation learning benchmarks where the ordering of features is ignored, our method still produces strong empirical performance---it consistently outperforms Barlow Twins~\citep{zbontar2021barlow}, which can be seen as a less-principled approximation to our objective, and is competitive with a range of strong SSL baselines on ImageNet~\citep{deng2009imagenet} for linear probe and transfer learning tasks. 
    \item We establish the conditions when NeuralEF can learn eigenfunctions of indefinite kernels, enabling a novel application of it to graph representation learning problems. On a large-scale node property prediction benchmark~\citep{hu2020open}, Neural Eigenmap outperforms classic Laplacian Eigenmap and GCNs~\citep{kipf2016semi} with decent margins, and its evaluation cost at test time is substantially lower than GCNs. 
\end{itemize}

\section{Neural Eigenfunctions for Representation Learning}
\label{sec:eigen-rl}

Eigenfunctions are the central object of interest in many scientific and engineering domains, such as solving partial differential equations (PDEs) and the spectral methods in machine learning. 
Typically, an eigenfunction $\psi$ of the linear operator $T$ satisfies
\begin{align} \label{eq:eigen-problem}
    T \psi = \mu \psi,
\end{align}
where $\mu$ is a scalar called the eigenvalue associated with $\psi$. 
In this work, we focus on the kernel integral operator $T_\kappa: L^2(\mathcal{X}, p) \to L^2(\mathcal{X}, p)$,\footnote{$\mathcal{X}$ denotes the support of observations and $p(\vx)$ is a distribution over $\mathcal{X}$. $L^2(\mathcal{X}, p)$ is the set of all square-integrable functions w.r.t. $p$.} defined as
\begin{align}
    (T_{\kappa}f)(\vx) = \int \kappa(\vx, \vx') f(\vx') p(\vx') \; d\vx'. 
\end{align}
Here the kernel $\kappa$ can be viewed as an infinite-dimensional symmetric matrix and thereby \cref{eq:eigen-problem} for $T_\kappa$ closely resembles a matrix eigenvalue problem.

In machine learning, the study of eigenfunctions and their relationship with representation learning dates back to the work on spectral clustering~\citep{shi2000normalized} and Laplacian Eigenmaps~\citep{belkin2003laplacian}. 
In these methods, the kernel $\kappa$ is derived from a graph that measures similarity between data points---usually $\kappa$ is a variant of the graph adjacency matrix. 
Then, for each data point, the outputs of eigenfunctions associated with the $k$ largest eigenvalues are collected as a vector $\psi(\vx)\triangleq[\psi_1(\vx), \psi_2(\vx), \dots, \psi_k(\vx)]$.
These vectors prove to be %
optimal embeddings that preserve local neighborhoods on data manifolds. 
Moreover, the feature extractor $\psi_j$ for each dimension is orthogonal to others in function space, so redundancy is desirably minimized. 
Following \citet{belkin2003laplacian}, we call $\psi(\vx)$ the \emph{eigenmap} of $\vx$.

Our work builds upon the observation that eigenmaps can serve as good representations. 
But, unlike previous work %
that solves \cref{eq:eigen-problem} in a nonparametric way---by decomposing a gram matrix computed on all data points---we approximate $\psi(\vx)$ with a neural network. 
Our parametric approach makes it possible to learn eigenmaps for a large dataset like ImageNet, meanwhile also enabling straightforward out-of-sample generalization. 
This is discussed further in the next section.

We leverage the NeuralEF algorithm, proposed by \citet{deng2022neuralef} as a function-space generalization of EigenGame~\citep{gemp2020eigengame}, to approximate the $k$ principal eigenfunctions of a kernel using neural networks (NNs). 
In detail, NeuralEF introduces $k$ NNs ${\psi}_i, i=1,...,k,$\footnote{We abuse ${\psi}_i$ to represent the NN approximating the $i$-th principal eigenfunction if there is no misleading.} which are ended with $L^2$-BN layers~\citep{deng2022neuralef}, a variant of batch normalization (BN)~\citep{ioffe2015batch}, and optimizes them simultaneously by:%
\begin{equation}
\small
\label{eq:oriobj}
\max_{{\psi}_j}  R_{j,j} - \alpha \sum_{i=1}^{j-1} R_{i,j}^2 \text{ for } j=1,\dots,k 
\end{equation}
where 
\begin{equation}
\label{eq:oriR}
 R_{i,j} \triangleq  \mathbb{E}_{p(\vx)}\mathbb{E}_{p(\vx')} [{\kappa}(\vx, \vx') {\psi}_i(\vx) {\psi}_j(\vx')].
\end{equation}
In practice, there is no real obstacle for us to use a single shared neural network $\psi: \mathcal{X} \to \R^{k}$ with $k$ outputs, each approximating a different eigenfunction. 
In this sense, we rewrite $R$ in a matrix form:
\begin{equation}
    R \triangleq  \mathbb{E}_{p(\vx)}\mathbb{E}_{p(\vx')} [{\kappa}(\vx, \vx')\psi(\vx) \psi(\vx')^\top]. %
\end{equation}
This work adopts this approach as it improves the scaling with $k$ and network size.

Learning eigenfunctions provides a unifying surrogate objective for unsupervised deep representation learning. 
Moreover, the representation given by $\psi$ is ordered and highly structured---different components are orthogonal in the function space and those associated with large eigenvalues preserve more critical information from the kernel.

\vspace{-.1cm}
\section{From Neural Eigenfunctions to Self-Supervised Learning}
\vspace{-.2cm}

Recent work on the theory of self-supervised learning (SSL) has noticed a strong connection between representations learned by SSL and spectral embeddings of data computed from a predefined augmentation kernel~\citep{haochen2021provable,balestriero2022contrastive,johnson2022contrastive}. 
In these works, a clean data point $\bar{\vx}$ generates random augmentations (views) according to some augmentation distribution $p(\vx|\bar{\vx})$. 
Neural networks are trained to maximize the similarity of representations across different augmentations. 
\citet{johnson2022contrastive} defined the following augmentation kernel based on the augmentation graph constructed by \citet{haochen2021provable}: 
\begin{equation}
\label{eq:ka}
\begin{aligned}
{\kappa}(\vx, \vx') &\triangleq \frac{p(\vx, \vx')}{{p(\vx)p(\vx')}}, %
\end{aligned}
\end{equation}
where $p(\vx, \vx') \triangleq \mathbb{E}_{p_d(\bar{\vx})} [p(\vx | \bar{\vx})p(\vx' | \bar{\vx})]$ and $p_d$ is the distribution of clean data. 
$p(\vx, \vx')$ characterizes the probability of generating $\vx$ and $\vx'$ from the same clean data through augmentation, which can be seen as a measure of semantic closeness. 
$p(\vx), p(\vx')$ are the marginal distributions of $p(\vx, \vx')$. 
It is easy to show that this augmentation kernel is positive semidefinite.

Plugging the above definition of ${\kappa}(\vx, \vx')$ into \cref{eq:oriR} yields
\begin{equation}
\label{eq:R-aug}
    R = \mathbb{E}_{p(\vx, \vx')} [\psi(\vx)\psi(\vx')^\top]\approx \frac{1}{B}\sum_{b=1}^B \psi(\vx_b) \psi(\vx_b^+)^\top. 
\end{equation}
Here, $\vx_b$ and $\vx_b^+$ are two independent samples from $p(\vx|\bar{\vx}_b)$
with $\bar{\vx}_1, \bar{\vx}_2, \dots, \bar{\vx}_B$ being a minibatch of data points. 
Define $\Psix \triangleq [\psi(\vx_1), \psi(\vx_2), \cdots, \psi(\vx_B)] \in \R^{k \times B}$ and $\Psixaug$ similarly. 
The optimization problems in \cref{eq:oriobj} for learning neural eigenfunctions can then be implemented in auto-differentiation frameworks~\citep{baydin2018automatic} using a \emph{single} ``surrogate'' loss---a function that we can differentiate to obtain correct gradients for all maximization problems in \cref{eq:oriobj}:
\begin{equation}
\label{eq:vision-loss}
\small
\ell(\mathbf{X}_{B}, \mathbf{X}_B^+) = -\sum_{j=1}^k \big(\Psix \Psixaug^\top\big)_{j,j} + \alpha \sum_{j=1}^k\sum_{i=1}^{j-1} \big(\hatPsix\Psixaug^\top\big)_{i,j}^2.
\end{equation}
Here $\hatPsix$ denotes a constant fixed to the value of $\Psix$ during gradient computation, corresponding to the fixed $\psi_i$ involved in the $j$ optimization problem in \cref{eq:oriobj}. 
Throughout this work, we will use the hat symbol to denote a value that is regarded as constant when we are computing gradients. 
In auto-differentiation libararies, this can be implemented with a stop-gradient operation.

\paragraph{Learning ordered representations.}
As proven by \citet{deng2022neuralef}, the above objective function results in each component of $\psi$ converging to a unique eigenfunction ordered by the corresponding eigenvalue. 
E.g., the first dimension of the output of $\psi$ aligns with the eigenfunction of the largest eigenvalue. 
This bears similarity to PCA, where the principal components contain most information of the kernel and are orthogonal to each other. 

\paragraph{Linear probe evaluation.} 
We can view the above optimization problem as a kind of SSL algorithm as it learns representations from mutiple views (augmentations) of data. 
For SSL methods, a gold standard for quantifying the quality of the learned representations is their linear probe performance, where a linear head is employed to classify the representations to semantics categories. 
Yet, the linear probe does not take advantage of ordered representations, as suggested by \citet{haochen2021provable} as well. 
Even if the representation is replaced by the output of an arbitrary span of eigenfunctions, the linear classifier weight can be simply adjusted to produce the same classifier. 
This implies that replacing $\hatPsix$ with $\Psix$ in \cref{eq:vision-loss} (which changes the optimal solution to arbitrary span of eigenfunctions) does not affect the optimal classifier and may actually ease optimization because it relaxes the ordering constraints.
So, we adapt the loss specifically for linear probe tasks as follows:
\begin{equation}
\label{eq:linear-probe-loss}
\small
\ell_{\text{lp}}(\mathbf{X}_{B}, \mathbf{X}_B^+) = -\sum_{j=1}^k \big(\Psix \Psixaug^\top\big)_{j,j} + \alpha \sum_{j=1}^k\sum_{i=1}^{j-1} \big(\Psix\Psixaug^\top\big)_{i,j}^2.
\end{equation}

\paragraph{Connection to Barlow Twins~\citep{zbontar2021barlow}.} 
Interestingly, the SSL objective defined in Barlow Twins can be written using $\Psix$ and $\Psixaug$: 
\begin{equation}
\small
    \ell_\text{BT}(\mathbf{X}_{B}, \mathbf{X}_B^+) = \sum_{j=1}^k \left[1-\big(\Psix \Psixaug^\top\big)_{j,j}\right]^2 + \lambda \sum_{j=1}^k\sum_{i\neq j} \big(\Psix \Psixaug^\top\big)_{i,j}^2,
\end{equation}
where $\lambda$ denotes a trade-off coefficient. 
This objective makes a close analogy to ours defined in \cref{eq:linear-probe-loss}. 
For the first term, our objective directly maximizes diagonal elements, but Barlow Twins pushes these elements to 1.
Although they have a similar effect, the gradients and optimal solutions of the two problems can differ. 
For the second term, we penalize only the lower-diagonal elements while Barlow Twins concerns all off-diagonal ones. 
With this, we argue the objective of Barlow Twins is an approximation of our objective function for linear probe. 

This section builds upon the kernels of \citet{haochen2021provable} and \citet{johnson2022contrastive}.
The spectral contrastive loss (SCL) of \citet{haochen2021provable} only recovers the subspace spanned by eigenfunctions, so their learned representation does not exhibit an ordered structure as ours.
Moreover, as will be shown in Section~\ref{sec:exp-1}, our method empirically benefits more from a large $k$ than SCL. 
{Concurrent to our work, the extended conference version of \citet{johnson2022contrastive} also applied NeuralEF to the kernel of \cref{eq:ka}~\citep{johnson2023contrastive}. 
However, they focused on the optimality of the representation obtained by kernel PCA and only tested NeuralEF as an alternative in synthetic tasks. 
In contrast, our work extends NeuralEF to larger-scale problems such as ImageNet-scale SSL and graph representation learning and discusses the benefit of ordered representation for image retrieval.}

\vspace{-.1cm}
\section{Graph Representation Learning with Neural Eigenfunctions}
\vspace{-.1cm}
\label{sec:rel-graph-kernel}

In a variety of real-world scenarios, the observations do not exist in isolation but are related to each other.
Their relations are often given as a graph. 
Assume we have a graph dataset $(\mathbf{X}, \mathbf{A})$, where 
$\mathbf{X}\triangleq\{\vx_i
\}_{i=1}^n$ denotes the node set and $\mathbf{A}$ is the graph adjacency matrix. 
We define  $\mathbf{D}=\mathtt{diag}(\mathbf{A}\mathbf{1}_n)$ and the normalized adjacency matrix  $\bar{\mathbf{A}} \triangleq \mathbf{D}^{-1/2}\mathbf{A}\mathbf{D}^{-1/2}$. 
In spectral clustering~\citep{shi2000normalized}, it was shown that the eigenmaps produced by principal eigenvectors of $\bar{\mathbf{A}}$ 
are relaxed cluster assignments of nodes that minimize the graph cut.
This motivates us to use them as node representations 
in downstream tasks. 
However, computing these node representations requires eigendecomposition of the $n$-by-$n$ matrix $\bar{\mathbf{A}}$ and hence does not scale well. 
Moreover, it cannot handle  out-of-sample predictions %
where we need the representation of a novel test example. 

We propose to treat $\bar{\mathbf{A}}$ as 
the gram matrix of the kernel 
$\dot{\kappa}(\vx, \vx')$ on $\mathbf{X}$ 
and apply NeuralEF to learn its $k$ principal eigenfunctions.
However, unlike the augmentation kernel from the last section, the normalized adjacency matrix can be indefinite\footnote{One might point out that the graph Laplacian is always positive semidefinite. However, in this case, the eigenmaps should be generated by eigenfunctions with the $k$ \emph{smallest} eigenvalues.} for an arbitrary graph. 
Fortunately, we have the following theorem showing the NeuralEF algorithm could still find the $k$ principal eigenfunctions for indefinite kernels as long as it has no less than $k-1$ positive eigenvalues.

\begin{restatable}[Extend NeuralEF for processing indefinite kernels]{thm}{Thm}
\label{theory:0}
Suppose the kernel $\dot{\kappa}$ has at least $k-1$ positive eigenvalues. And let
\begin{equation}
\label{eq:R}
    R \triangleq  \mathbb{E}_{p(\vx)}\mathbb{E}_{p(\vx')} [\dot{\kappa}(\vx, \vx')\psi(\vx) \psi(\vx')^\top].
\end{equation}
Then, the optimization problem defined in \cref{eq:oriobj} has $\dot{\kappa}$'s $k$ principal eigenfunctions
as the solution, of which the $j$-th component is the eigenfunction associated with the $j$-th largest eigenvalue. 
\end{restatable}

We know the normalized adjacency matrix has no less than $k - 1$ positive eigenvalues when the graph contains at least $k - 1$ disjoint subgraphs~\citep{marsden2013eigenvalues}, and real-world datasets usually meet this condition.  
Under this condition, the final surrogate loss for node representation learning using a mini-batch of nodes $\mathbf{X}_B$ as well as the corresponding normalized adjacency $\bar{\mathbf{A}}_B$ is then
\begin{equation}
\label{eq:9}
\small
\begin{aligned}
\ell(\mathbf{X}_B, \bar{\mathbf{A}}_B) = \sum_{j=1}^k \big(\vpsi_{\mathbf{X}_B}^{} \bar{\mathbf{A}}_B \vpsi_{\mathbf{X}_B}^\top\big)_{j,j} 
- \alpha \sum_{j=1}^k\sum_{i=1}^{j-1} \big(\hat{\vpsi}_{\mathbf{X}_B}^{}\bar{\mathbf{A}}_B \vpsi_{\mathbf{X}_B}^\top\big)_{i,j}^2.
\end{aligned}
\end{equation}
This makes Neural Eigenmap easily scale up to real-world graphs with millions of nodes. 

\textbf{Comparison with other graph embedding methods. } Compared to classic nonparametric graph embedding methods like Laplacian Eigenmaps~\citep{belkin2003laplacian} and node2vec~\citep{grover2016node2vec}, our method enables flexible NN-based out-of-sample prediction. 
Besides, the training cost of our model is more tolerable than them as they usually entail matrix decomposition whose computational complexity is typically $\mathcal{O}(n^3)$ w.r.t. the number of nodes $n$. 
Compared to graph neural networks~\citep{kipf2016semi,hamilton2017inductive}, %
our model has substantially faster forward/backward passes, which is especially important 
for the test phase, because it avoids aggregating information from the graph. 
Stochastic training is also more straightforward with our method, and the unsupervised nature makes our method benefit from massive unlabeled data. %

\vspace{-.15cm}
\section{Related Work}
\vspace{-.2cm}

Self-supervised learning (SSL) has sparked great interest in computer vision. Different methods define different pretext tasks to realize representation learning~\citep{doersch2015unsupervised,wang2015unsupervised,noroozi2016unsupervised,zhang2016colorful,pathak2017learning,gidaris2018unsupervised}. 
More recent approaches train Siamese nets~\citep{bromley1993signature} to model image similarities via contrastive objectives~\citep{hadsell2006dimensionality,wu2018unsupervised,oord2018representation,chen2020simple,he2020momentum,caron2020unsupervised,tomasev2022pushing} or non-contrastive ones~\citep{grill2020bootstrap,chen2021exploring,caron2021emerging,bardes2022vicregl,garrido2022duality,bardes2021vicreg}.
However, due to the existence of trivial constant solutions, popular SSL methods usually introduce empirical tricks such as large batches, asymmetric mechanisms, and momentum encoders to prevent representation collapse.  %
In contrast, Neural Eigenmap removes the requirement for these tricks and builds on more grounded theoretical foundations. 
We also note that cross-modality representation learning methods like CLIP~\citep{radford2021learning} can align the representation space of images and texts and have sparked a variety of practical applications~\citep{shen2021much,agarwal2021evaluating,zhou2021denseclip}. 
Adjusting Neural Eigenmap to cover this kind of contrastive learning deserves further investigation. 
More recently, transformer-based SSL methods emerge~\citep{bao2021beit,zhou2021ibot,he2022masked,assran2022masked,zhou2022mugs,fang2023eva}.
They routinely operate on the image patches and usually learn by masked token prediction or its variant.

Theoretical understanding of SSL has gained increasing attention due to the importance of such a learning paradigm. 
A seminal work by \citet{haochen2021provable} connects contrastively learned representations to the spectral embeddings of the normalized adjacency matrix of an augmentation graph. However, the developed spectral contrastive loss (SCL) only recovers the subspace spanned by eigenfunctions, causing the representation to lose an ordered structure. 
Subsequently, \citet{johnson2022contrastive} incorporate NT-XEnt and NTLogistic losses into this theoretical framework, but a scalable algorithm for recovering the principal eigenfunctions of the relevant kernel has not been derived. In addition, \citet{balestriero2022contrastive} relate two other popular SSL methods, Barlow Twins and VICReg, to spectral analysis methods, and establish a connection between SimCLR and Kernel ISOMAP. \citet{tian2022deep} explains contrastive learning as a game between a max player and a min player, and demonstrates a relationship between contrastive losses and PCA for deep linear networks. Furthermore, there have been non-trivial efforts to understand SSL theoretically using techniques beyond spectral learning~\citep{arora2019theoretical,bansal2020self,lee2021predicting,tian2020understanding,tosh2021contrastive,tsai2020self,wang2020understanding}.

\vspace{-.15cm}
\section{Experiments}
\vspace{-.2cm}
\label{sec:exp}
In this section, we apply Neural Eigenmap to diverse scenarios to empirically study its behaviors. 
Neural Eigenmap is easy to implement and we will release the code after acceptance. 

\vspace{-.1cm}
\subsection{Adaptive-Length Codes for Image Retrieval}
\vspace{-.2cm}

\label{sec:exp-ir}

\begin{figure*}[t]
\vspace{-1ex}
\centering
    \includegraphics[width=0.85\linewidth]{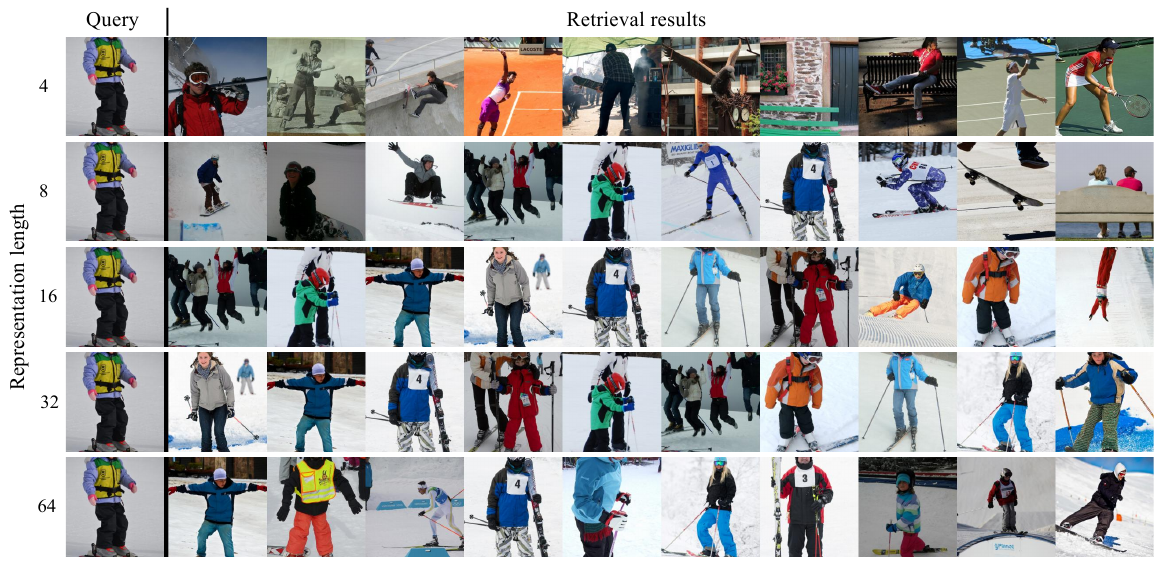}
\vspace{-1.ex}
\caption{\small Visualization of retrieval results on COCO with the representations yielded by Neural Eigenmap. Neural Eigenmap is trained on ImageNet and has not been tuned on COCO. The five rows correspond to using the first 4, 8, 16, 32, and 64 entries of the neural eigenmaps for retrieval, respectively. In each row, the first image is a query, and the rest are the top 10 images closest to it over the set.}
\vspace{-2ex}
\label{fig:1}
\end{figure*}

Neural Eigenmap learns structured representations where features are ordered by their relative importance. 
It can be reassuringly truncated without losing critical information of the original data. 
Here, we exploit this property to perform adaptive compression of representations in image retrieval, 
where a short code length can significantly reduce retrieval burden (both the memory cost for storage and the time needed to find the top-$M$ closest samples).

We train Neural Eigenmap on ImageNet using the augmentation kernel with the neural eigenfunction defined as a ResNet-50~\citep{he2016deep} encoder followed by a 2-layer MLP projector with hidden and output dimension 4096\footnote{We apply batch normalization~\citep{ioffe2015batch} and ReLU to the hidden layer.} (i.e., $k=4096$) for \textbf{100 epochs}. 
The augmentation and optimization recipes are identical to those in SimCLR~\citep{chen2020simple}.
We set $\alpha=0.005$ for $k=4096$ and linearly scale it for other values of $k$ (e.g., when $k=8192$, we set $\alpha$ to $0.0025$). 
After training, we evaluate the learned representations on COCO, NUS-WIDE~\citep{chua2009nus}, PASCAL VOC 2012~\citep{pascal-voc-2012}, and MIRFLICKR-25000~\citep{huiskes2008mir} by performing image retrieval based on standard data splits.
We highlight that \emph{no further fine-tuning is performed}.

Images whose representations have the largest cosine similarity with the query ones are returned. 
We evaluate the results by mean average precision (mAP) and precision with respect to the top-$M$ returned images. 
We set $M=5000$ for COCO and NUS-WIDE, and set $M=100$ for PASCAL VOC 2012 and MIRFLICKR-25000. 
The returned images are considered to be relevant to the query image when at least one class labels of them match. 
We include Neural Eigenmap w/o $\mathtt{stop\_grad}$ and SCL as two baselines because (\RN{1}) the comparison between Neural Eigenmap and Neural Eigenmap w/o $\mathtt{stop\_grad}$ can reflect that learning ordered eigenfunctions leads to structured representations; (\RN{2}) SCL is effective in learning representations as revealed by previous studies and is also related to spectral learning. 
We experiment with codewords of various lengths. 
For Neural Eigenmap, we use the elements with small indices in the representations.
The representations of Neural Eigenmap w/o $\mathtt{stop\_grad}$ and SCL are non-structured, so we randomly sample elements to perform retrieval and report the average results and error bars over $10$ runs.

\begin{figure*}[t]
\centering
    \includegraphics[width=0.25\linewidth]{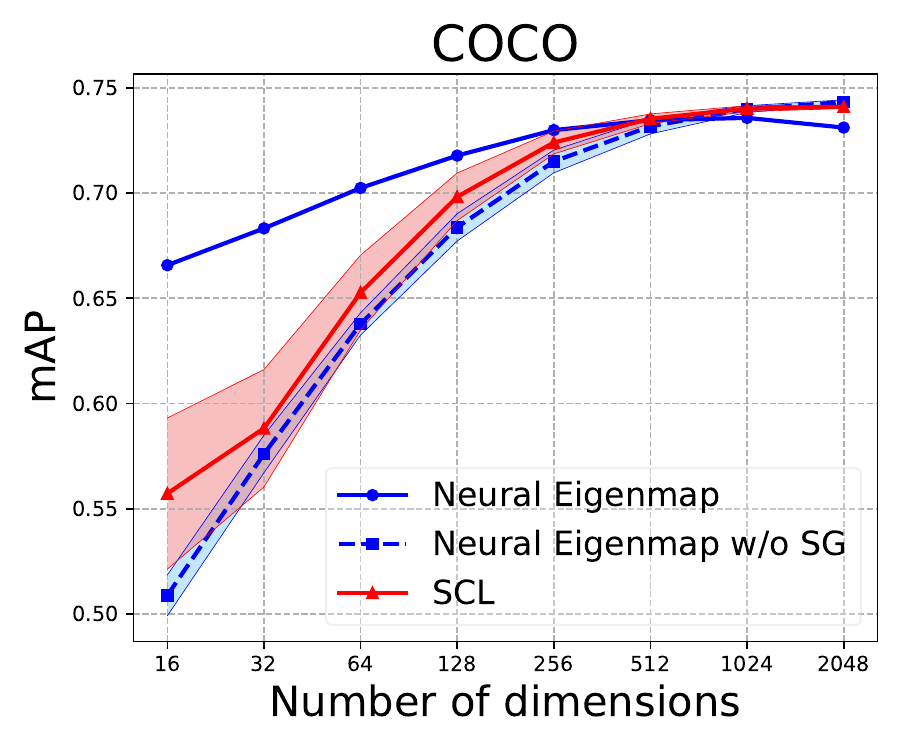}
    \includegraphics[width=0.23\linewidth]{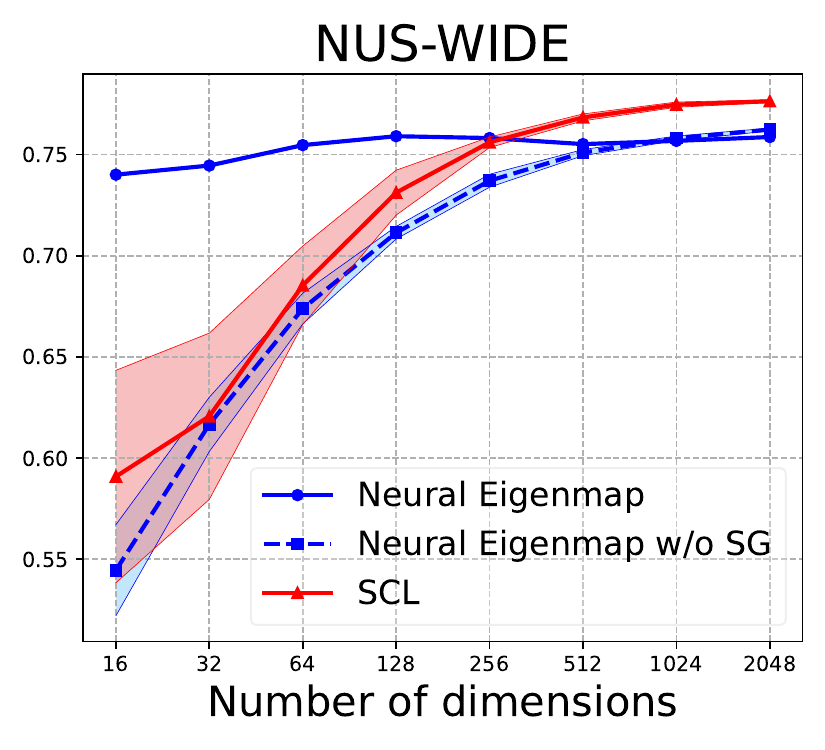}
    \includegraphics[width=0.23\linewidth]{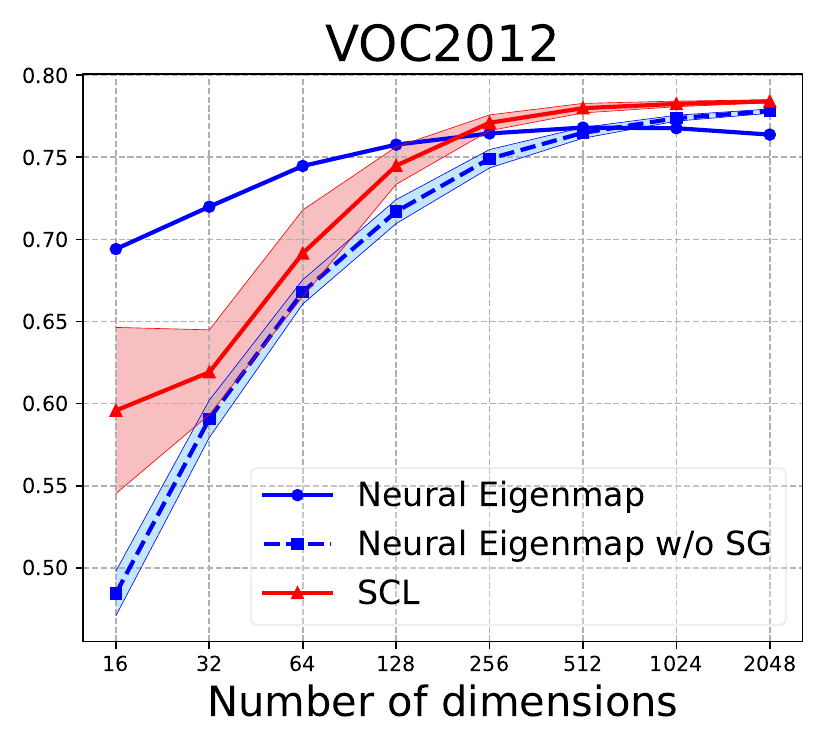}
    \includegraphics[width=0.23\linewidth]{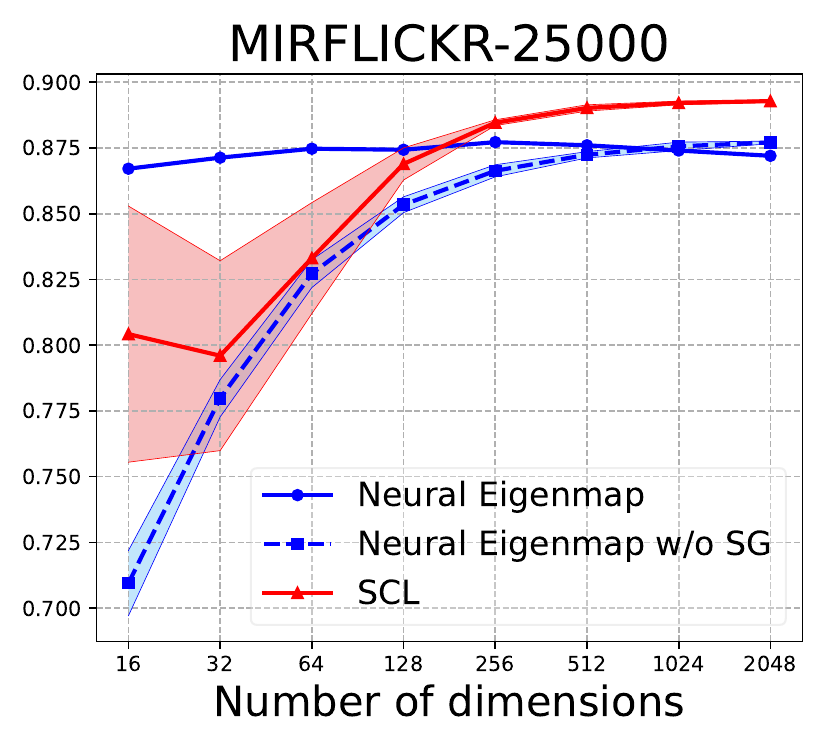}
    \vspace{-1.ex}
    \caption{\small Retrieval mAP varies w.r.t. representation dimensionality.}
    \label{fig:2-1}
\end{figure*}

We present the results in \cref{fig:2-1} and \cref{fig:2-2} in Appendix. As shown, Neural Eigenmap requires up to $16\times$ fewer representation dimensions than the competitors to achieve similar retrieval performance. 
We also note that the retrieval performance of Neural Eigenmap drops when the code length is too high. 
This is probably because the NeuralEF objective has trouble recovering the eigenfunctions associated with small eigenvalues~\citep{deng2022neuralef}, so the tailing components may contain useless information. 
A potential solution is to add perturbations to the kernel to remove small eigenvalues, making all eigenfunctions more accurately recoverable. We leave this as future work.

We further visualize some retrieval results on COCO in \cref{fig:1}. 
They are consistent with the quantitative results. 
We can see the results quickly become satisfactory when the code length exceeds 8, which %
implies that the first few elements of our representations already contain rich semantics of the input. 
Refer to \cref{app:sclpca} for the comparison between our methods and another baseline that combines SCL and principal component analysis (PCA).

\subsection{Unsupervised Visual Representation Learning}
\label{sec:exp-1}

\begin{wraptable}{r}{60mm}
\vspace{-2.5ex}
  \centering
 \caption{\small Comparisons on ImageNet linear probe accuracy (\%) with the ResNet-50 encoder pre-trained for \emph{100 epochs}. The results of SimCLR, SwAV, MoCo v2, BYOL, and SimSiam are from \citep{chen2021exploring}. The result of SCL is from \citep{haochen2021provable}, and that of Barlow Twins is reproduced by ourselves.
 As shown, our method outperforms all baselines. }
  \label{table:1}
  \setlength{\tabcolsep}{3pt}
  \footnotesize
  \begin{tabular}{@{}lcc@{}}%
  \toprule
Method  & \multicolumn{1}{c}{\specialcell{batch size}}
& \multicolumn{1}{c}{top-1 accuracy}\\
\midrule
\emph{SimCLR} & 4096
& 66.5\\
\emph{SwAV} & 4096
& 66.5\\
\emph{MoCo v2} & 256
& 67.4\\
\emph{BYOL} & 4096
& 66.5\\
\emph{SimSiam} & 256
& 68.1\\
\emph{SCL} & 384 
& 67.0\\
\hline
\emph{Barlow Twins} & 2048 & 66.2\\
\emph{Neural Eigenmap} & 2048
& \textbf{68.4} \\
  \bottomrule
   \end{tabular}
\vspace{-0.3cm}
\end{wraptable}
\paragraph{Linear Probe.} We follow the setups of \cref{sec:exp-ir}. We train a supervised linear classifier on the representations yielded by the ResNet-50 encoder and then test it. 
We compare to popular SSL methods including SimCLR~\citep{chen2020simple}, SwAV~\citep{caron2020unsupervised}, MoCo v2~\citep{chen2020improved}, BYOL~\citep{grill2020bootstrap}, SimSiam~\citep{chen2021exploring}, spectral contrastive loss (SCL)~\citep{haochen2021provable}, and Barlow Twins~\citep{zbontar2021barlow}, with the results reported in \cref{table:1}. 
As shown, 
Neural Eigenmap can beat all baselines. 
The performance gain of Neural Eigenmap over Barlow Twins reflects the merits of our formulation. 
We note that SimSiam, with a batch size of 256, is also well-performing, so it may be preferred when resources are constrained. 
Yet, the smaller batch size would substantially increase the training time. %
Our method should be preferred when the memory cost is not a concern, such as on a standard lab server with multiple GPUs. 

\begin{table*}[t]
\centering
\begin{minipage}{.48\textwidth}
  \centering
 \caption{\small ImageNet linear probe accuracy varies w.r.t. batch size. All methods use a 2-layer MLP projector of dimension 2048. BT refers to Barlow Twins.}
  \label{table:2}
  \setlength{\tabcolsep}{6pt}
  \footnotesize
  \begin{tabular}{@{}lccc@{}}
  \toprule
\specialcell{Batch size}  &\multicolumn{1}{c}{\emph{SCL}}  & \multicolumn{1}{c}{\specialcell{\emph{BT}}} & \multicolumn{1}{c}{\specialcell{\emph{Neural Eigenmap}}}\\
\midrule
{256} & \textbf{63.0} & 57.6 & 60.5\\
{512} & \textbf{64.6} & 60.3 & 63.3\\
{1024} & 65.6 & 61.8 & \textbf{65.7}\\
{2048} & 66.5 & 60.4 & \textbf{66.8}\\
  \bottomrule
   \end{tabular}
\end{minipage}
\quad
\begin{minipage}{.48\textwidth}
  \centering
 \caption{\small ImageNet linear probe accuracy varies w.r.t. the dimension of the 2-layer MLP projector. All methods adopt a batch size of 2048.}
  \label{table:3}
  \setlength{\tabcolsep}{4pt}
  \footnotesize
  \begin{tabular}{@{}lccc@{}}
  \toprule
\specialcell{Projector dim.}   &\multicolumn{1}{c}{\emph{SCL}}  & \multicolumn{1}{c}{\specialcell{\emph{BT}}} & \multicolumn{1}{c}{\specialcell{\emph{Neural Eigenmap}}}\\
\midrule
{2048} & 66.5 & 60.4 & \textbf{66.8}\\
{4096} & 67.1 & 63.9 & \textbf{67.7}\\
{8192} & NaN & 66.2 & \textbf{68.4}\\
  \bottomrule
  \\
   \end{tabular}
\end{minipage}
\end{table*}

\begin{table*}[t]
\vspace{-.3ex}
  \centering
 \caption{\small Transfer learning on COCO detection and instance segmentation. All unsupervised methods are pre-trained on ImageNet for 200 epochs using ResNet-50. Mask R-CNNs~\citep{he2017mask} with the C4-backbone~\citep{Detectron2018} are built given the pre-trained models and fine-tuned in COCO 2017 train ($1\times$ schedule), then evaluated in COCO 2017 val. The results of the competitors are from \citet{chen2021exploring}.}
 \vspace{-.2ex}
  \label{table:5}
  \footnotesize
  \setlength{\tabcolsep}{12pt}
  \begin{tabular}{@{}lcccccc@{}}
  \toprule
\multicolumn{1}{c}{\multirow{2}{*}{Pre-training method}} & \multicolumn{3}{c}{COCO detection} & \multicolumn{3}{c}{ COCO instance seg.} \\ 
& \multicolumn{1}{c}{AP$_{50}$}& \multicolumn{1}{c}{AP} & \multicolumn{1}{c}{AP$_{75}$}& \multicolumn{1}{c}{AP$_{50}^\text{mask}$}& \multicolumn{1}{c}{AP$^\text{mask}$} & \multicolumn{1}{c}{AP$_{75}^\text{mask}$}\\
\midrule
\emph{ImageNet supervised} & 58.2 & 38.2 & 41.2 & 54.7 & 33.3 & 35.2 \\
\emph{SimCLR} & 57.7 &37.9 &40.9& 54.6 &33.3& 35.3 \\
\emph{MoCo v2} & 58.8& 39.2 &42.5 &55.5& 34.3& 36.6 \\
\emph{BYOL} & 57.8 &37.9& 40.9& 54.3 &33.2 &35.0 \\
\emph{SimSiam, base} & 57.5& 37.9 &40.9 &54.2 &33.2 &35.2 \\
\emph{SimSiam, optimal} & 59.3& 39.2 &42.1 &56.0 &34.4 &36.7 \\
\hline 
\emph{Barlow Twins} & 59.0 & 39.2 & 42.5 & 56.0 & 34.3 & 36.5 \\
\emph{Neural Eigenmap} & \textbf{59.6}& \textbf{39.9} & \textbf{43.5} & \textbf{56.3} & \textbf{34.9} & \textbf{37.4} \\
  \bottomrule
   \end{tabular}
\end{table*}

SCL and Barlow Twins deploy similar learning objectives with Neural Eigenmap, so we opt to take a closer look at their empirical performance.\footnote{VICReg borrows the covariance criterion from Barlow Twins and the two methods perform similarly, so we only include Barlow Twins into our studies.} 
We reproduce them to place them under the same training protocol as Neural Eigenmap for a fair comparison. 
In particular, we have tuned the trade-off hyper-parameter $\lambda$ in Barlow Twins, which plays a similar role with the $\alpha$ in our method. 
We present the results in \cref{table:2} and \cref{table:3}. 
When fixing the hidden and output dimension of the projector as $2048$, we see that increasing batch size enhances the performance of all three methods (except for the batch size $2048$ for Barlow Twins). 
Compared to the other two methods, a medium batch size like $1024$ or $2048$ can yield significant gains for Neural Eigenmap. 
Meanwhile, when fixing the batch size as $2048$, all methods yield better accuracy when using a higher projector dimension (but SCL failed to converge when setting the dimension to $8192$). 
We can see that NEigenmap benefits more from a higher output dimension than SCL.

\begin{wraptable}{r}{63mm}
\vspace{-0.45cm}
\centering
\caption{\small Comparisons on ImageNet linear probe accuracy with various training epochs.}
 \vspace{-2.ex}
\label{table:4}
\small
\setlength{\tabcolsep}{6pt}
    \begin{tabular}{@{}lccc@{}}
  \toprule
Method & 100 ep & 200 ep & 400 ep\\
\midrule
\emph{SimSiam} &  68.1 & 70.0 & 70.8\\
\emph{Neural Eigenmap}& 68.4 & 70.3 & 71.5\\
  \bottomrule
   \end{tabular}
    \vspace{-0.4cm}
\end{wraptable}
We next study if a longer training procedure would result in higher linear probe performance.
We compare Neural Eigenmap to SimSiam because it is strongest baseline in \cref{table:1}. 
The results are shown in \cref{table:4}.
Neural Eigenmap consistently %
outperforms SimSiam as we increase the training epochs.

\paragraph{Transfer Learning.}
We then evaluate the representation quality by transferring the features to object detection and instance segmentation tasks on COCO~\citep{lin2014microsoft}. 
The models are pre-trained for 200 epochs and then fine-tuned end-to-end on the target tasks following standard practice. 
We base our experiments on the public codebase from MoCo\footnote{\url{https://github.com/facebookresearch/moco}.}~\citep{he2020momentum} and tune only the fine-tuning learning rate (and set it to $0.05$) 
as suggested by \citet{chen2021exploring}. 
The results in \cref{table:5} demonstrate that Neural Eigenmap has better transferability than existing approaches. 
It achieves leading results across tasks and metrics with clear gaps. %

\subsection{Visualization of the Learning Eigenfunctions}

\begin{figure}[t]
\vspace{.1cm}
\centering
    \includegraphics[width=0.5\linewidth]{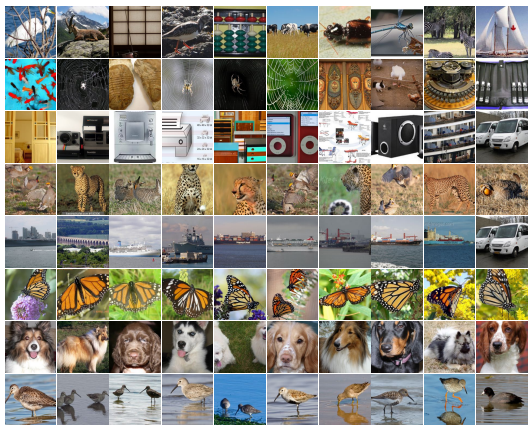}
    \vspace{-.2cm}
    \caption{\small The top $10$ samples from the ImageNet validation set that predominantly excite the first $8$ principal neural eigenfunctions.}
    \vspace{-.4cm}
    \label{eq:vis-by-samples}
\end{figure}
We visualize the neural eigenfunctions learned on ImageNet by examining which samples predominantly excite them. In \cref{eq:vis-by-samples}, we present the top $10$ samples from the validation set that elicit the strongest responses for the first $8$ neural eigenfunctions. 
An interesting observation is that samples within the same row exhibit similar semantic structures, while variations between the rows suggest potential orthogonality among the learned neural eigenfunctions.
More results are in \cref{app:vis-eigen}.

\begin{table*}[t]
\vspace{-.2ex}
  \centering
 \caption{\small Comparisons on OGBN-Products test accuracy (\%). The results of Neural Eigenmap refer to the linear probe performance. The results of the baselines are based on non-linear classifiers.
 }
 \vspace{-.2ex}
  \label{table:6}
  \setlength{\tabcolsep}{8pt}
  \footnotesize
  \begin{tabular}{@{}lccc@{}}
  \toprule
Method  & \multicolumn{1}{c}{\specialcell{100\% training labels}}& \multicolumn{1}{c}{\specialcell{10\% training labels}}& \multicolumn{1}{c}{\specialcell{1\% training labels}}\\
\midrule
\emph{Plain MLP} & 62.16 $\pm$ 0.15 &  57.44 $\pm$ 0.20 & 47.76 $\pm$ 0.62\\
\emph{Laplacian Eigenmap + MLP} & 64.21 $\pm$ 0.35 & 58.99 $\pm$ 0.20 & 49.94 $\pm$ 0.30 \\
\emph{Node2vec + MLP} & 72.50 $\pm$ 0.46 & 68.72 $\pm$ 0.43 & 61.97 $\pm$ 0.44 \\
\emph{GCN} & 75.72 $\pm$ 0.31 & 73.14 $\pm$ 0.34 & 67.61 $\pm$ 0.48 \\

\emph{Neural Eigenmap} & \textbf{78.33} $\pm$ 0.08 & \textbf{75.78} $\pm$ 0.46 & \textbf{68.04} $\pm$ 0.39\\
  \bottomrule
   \end{tabular}
\vspace{-0.2cm}
\end{table*}

\vspace{-.5ex}
\subsection{Node Representation Learning on Graphs}
\vspace{-.5ex}
\label{sec:exp:node}
We then apply Neural Eigenmap to OGBN-Products~\citep{hu2020open}, one of the most large-scale node property prediction benchmarks, with $2,449,029$ nodes and $61,859,140$ edges. 
We omit small-scale benchmarks since a large abundance of nodes are particularly important for Neural Eigenmap to learn generalizable representations. 
We use the graph kernel and specify the neural eigenfunction with a 11-layer MLP encoder followed by a projector. 
We set the encoder width to $2048$ and equip it with residual connections~\citep{he2016deep} to ease optimization. 
The projector is identical to that in \cref{sec:exp-1}. 
The training is performed on all nodes for 20 epochs using a LARS~\citep{you2017large} optimizer with batch size $16384$, weight decay $0$, and learning rate 0.3 (accompanied by a cosine decay schedule). 
We tune $\alpha$ according to the linear probe accuracy on validation data and finally set it to $0.3$.

After training, we assess the representations yielded by the encoder with linear probe. 
The training of the linear classifier lasts for $100$ epochs under a SGD optimizer with batch size $256$, weight decay $10^{-3}$, and learning rate $10^{-2}$ (with cosine decay). 
We experiment with varying numbers of training labels for performing linear probe to examine representation quality systematically. 
We compare to two non-parametric node embedding approaches, Laplacian Eigenmap and node2vec: 
the computed node embeddings are augmented to node features, on which MLP classifiers are trained. 
We include two other baselines GCN and MLP, which are directly trained on raw node features. 
We base the implementation on the public codebase\footnote{\scriptsize \url{https://github.com/snap-stanford/ogb/tree/master/examples/nodeproppred/products}.}. 
MLP baselines all have three layers of width $512$, and it is empirically observed larger width cannot bring considerable gains.

\cref{table:6} displays the comparison on test accuracy (summarized over 10 runs). 
Neural Eigenmap has shown superior performance over the baselines across multiple settings.
\emph{Laplacian Eigenmap + MLP} underperforms Neural Eigenmap because the representations yielded by Laplacian Eigenmap contain only undecorated spectral information of the graph Laplacian, while the representations of Neural Eigenmap are \emph{the outputs of the encoder}, which correspond to a kind of harmonized Laplacian Eigenmap according to the node features. 
Nevertheless, one limitation of Neural Eigenmap is that its training cost is substantially higher than the baselines (due to the large encoder).

\textbf{Test cost.}\; In the test phase, %
Neural Eigenmap makes predictions through a forward pass, while GCN still needs to aggregate information from the graph.
Therefore, %
Neural Eigenmap is more efficient than GCN at test time---
GCN's prediction time for a test datum is $0.3818$s,
while for Neural Eigenmap %
this is $0.0013$s~(on an RTX 3090 GPU).

\vspace{-.1cm}
\section{Conclusion}
\vspace{-.2cm}
In this paper, we formulate unsupervised representation learning as training neural networks to approximate the principal eigenfunctions of a pre-defined kernel. 
Our learned representations is structured---features with smaller indices contain more critical information.
This is a key advantage that distinguishes our work from existing self-supervised learning methods.
We provide strong empirical evidence of the effectiveness of %
our structured representations on large-scale benchmarks. 
Future directions may include designing suitable kernels for other data modalities such as video, image-text pairs, and point clouds. %

\section*{Reprodcibility Statements}
We submit the code for reproducing the results of image retrieval and linear probe. 
Please refer to README.md for specific instructions.

\bibliography{iclr2024_conference}
\bibliographystyle{iclr2024_conference}

\newpage
\appendix
\section{Proof}

\subsection{Proof of Theorem~\ref{theory:0}}\label{app:proof2}

\begin{restatable}{lemm}{Lemm}
\label{lemma:0}
Let $\mu_j$ denote the eigenvalues of $\dot{\kappa}$ and $\delta$ the indicator function. 
Let $\mu_s \triangleq \inf_{j\geq 1} \mu_j$ and assume $\mu_s > -\infty$. 
The kernel $\kappa(\vx, \vx') \triangleq \dot{\kappa}(\vx, \vx') - \mu_s{\delta_{\vx = \vx'}}/{\sqrt{p(\vx)p(\vx')}}$ is positive semidefinite and has the same eigenfunctions as $\dot{\kappa}(\vx, \vx')$. 
\end{restatable}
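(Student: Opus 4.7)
The plan is to reduce the statement to a simple operator-level shift identity $T_\kappa = T_{\dot\kappa} - \mu_s I$, from which both claims (same eigenfunctions, and positive semidefiniteness) follow immediately.

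First, I would work out the action of the kernel integral operator $T_\kappa$ on an arbitrary $f \in L^2(\mathcal{X}, p)$. By linearity,
\begin{equation*}
(T_\kappa f)(\vx) = \int \dot{\kappa}(\vx, \vx') f(\vx') p(\vx')\, d\vx' \;-\; \mu_s \int \frac{\delta_{\vx = \vx'}}{\sqrt{p(\vx) p(\vx')}} f(\vx') p(\vx')\, d\vx'.
\end{equation*}
The first term is just $(T_{\dot\kappa} f)(\vx)$. For the second term I would interpret $\delta_{\vx=\vx'}$ as the Dirac delta $\delta(\vx - \vx')$ and use the sifting property at $\vx' = \vx$, which contributes a factor $p(\vx)/\sqrt{p(\vx)\,p(\vx)} = 1$ (assuming $p(\vx) > 0$ on the support, which is implicit in the $L^2(\mathcal{X}, p)$ setting). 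Hence the second term collapses to $\mu_s f(\vx)$, giving the clean identity
\begin{equation*}
T_\kappa = T_{\dot\kappa} - \mu_s\, I.
\end{equation*}

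Next, let $\psi_j$ be an eigenfunction of $T_{\dot\kappa}$ with eigenvalue $\mu_j$. Applying the identity above,
\begin{equation*}
T_\kappa \psi_j = T_{\dot\kappa}\psi_j - \mu_s \psi_j = (\mu_j - \mu_s)\,\psi_j,
\end{equation*}
so $\psi_j$ is also an eigenfunction of $T_\kappa$, with shifted eigenvalue $\mu_j - \mu_s$. The set of eigenfunctions is therefore identical, and the eigenvalues are merely translated by $-\mu_s$. By the definition $\mu_s = \inf_{j \geq 1} \mu_j$, every shifted eigenvalue $\mu_j - \mu_s$ is nonnegative. Since $\{\psi_j\}$ forms an orthonormal basis of the relevant subspace and $T_\kappa$ has nonnegative spectrum on it, Mercer's theorem (or the spectral theorem for compact self-adjoint operators) yields that $\kappa$ is positive semidefinite.

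The only delicate step I anticipate is justifying the Dirac-delta computation rigorously: strictly speaking $\delta(\vx-\vx')/\sqrt{p(\vx)p(\vx')}$ is a distributional kernel rather than a pointwise function, so one should either (i) treat the identity $T_\kappa = T_{\dot\kappa} - \mu_s I$ as the \emph{definition} of $T_\kappa$, with the displayed kernel being a convenient shorthand, or (ii) take limits of mollifiers. Either interpretation leads to the same conclusion, so I would briefly note this and then proceed with the operator-level argument, which is the essential content of the lemma.
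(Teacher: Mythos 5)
Your proof is correct and follows essentially the same route as the paper: both evaluate the $\delta_{\vx=\vx'}/\sqrt{p(\vx)p(\vx')}$ term via the sifting property to obtain $T_\kappa = T_{\dot{\kappa}} - \mu_s I$ (the paper starts from an eigenpair of $\kappa$ and shows it is an eigenpair of $\dot{\kappa}$ with eigenvalue shifted by $+\mu_s$, you go in the reverse direction, but it is the same identity), and then concludes the eigenfunctions coincide while the eigenvalues of $\kappa$ are $\mu_j - \mu_s \geq 0$. Your explicit operator-level framing and the remark on the distributional nature of the kernel are minor stylistic refinements of the same argument.
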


\begin{proof}
Let $(\nu_j, \psi_j)$ denote an eigenpair of $\kappa(\vx, \vx')$.  
By the definition of eigenfunction, we have
\begin{equation*}
\small
\int \kappa(\vx, \vx')\psi_j(\vx')p(\vx')d\vx' = \nu_j\psi_j(\vx).
\end{equation*}
It follows that
\begin{equation*}
\small
\begin{aligned}
\int \dot{\kappa}(\vx, \vx')\psi_j(\vx')p(\vx')d\vx' &= \int {\kappa}(\vx, \vx') \psi_j(\vx') p(\vx')d\vx' + \mu_s \int \frac{\delta_{\vx = \vx'}}{\sqrt{p(\vx)p(\vx')}}\psi_j(\vx')p(\vx')d\vx'\\ 
&= \nu_j\psi_j(\vx) + \frac{\mu_s}{\sqrt{p(\vx)}} \int \delta_{\vx = \vx'} {\sqrt{p(\vx')}}\psi_j(\vx')d\vx'\\
&= \nu_j\psi_j(\vx) + \frac{\mu_s}{\sqrt{p(\vx)}} {\sqrt{p(\vx)}}\psi_j(\vx)\\
&= (\nu_j + \mu_s) \psi_j(\vx).
\end{aligned}
\end{equation*}
Namely, $(\nu_j + \mu_s, \psi_j)$ is an eigenpair of $\dot{\kappa}(\vx, \vx')$. 
Since $\mu_s$ is the smallest eigenvalues of $\dot{\kappa}(\vx, \vx')$, we have $\nu_j + \mu_s \geq \mu_s$, then $\nu_j \geq 0$. Therefore, any eigenvalue of $\kappa(\vx, \vx')$ is non-negative.

Similar to the above, it is easy to show that any eigenfunction of $\dot{\kappa}(\vx, \vx')$ will also be the eigenfunction of ${\kappa}(\vx, \vx')$, with eigenvalues shifted by $-\mu_s$. 
Therefore, we conclude that the two kernels have the same eigenfunctions.
\end{proof}

\Thm*
\begin{proof}
We reuse the notations in \cref{lemma:0}. 
When $\mu_s \geq 0$, the kernel is positive semidefinite and the result follows directly from \citet[Theorem 1 and Eq. (14)]{deng2022neuralef}. %
We prove the $\mu_s < 0$ case in the following.

Denote by $\psi_j: \mathcal{X} \rightarrow \mathbb{R}$ the function corresponding to the $j$-th output entry of $\psi$ and by $[a]$ the set of integers from $1$ to $a$. 
Based on \cref{lemma:0}, we denote by $(\mu_j - \mu_s, \phi_j)$ the ground-truth eigenpairs of the  positive semidefinite $\kappa(\vx, \vx')$. 
NeuralEF~\citep{deng2022neuralef} suggests simultaneously solving the following $k$ asymmetric maximization problems will make $\psi_j$ converge to $\phi_j$ for all $j \leq k$:
\begin{align*}
\small
\max_{\psi_j} \;& R_{jj} \;\;\textrm{s.t.:}\, R_{ij} = 0, c_j = 1, \forall j \in [k],~ i\in[j-1], \\
  \text{for\quad}  R_{ij} &\triangleq  \iint \psi_i(\vx)\kappa(\vx, \vx')\psi_j(\vx')p(\vx')p(\vx)d\vx'd\vx, \\
    c_j &\triangleq \int \psi_j(\vx)\psi_j(\vx)p(\vx)d\vx.
\end{align*}
Let $\dot{R}_{ij}\triangleq\iint \psi_i(\vx)\dot{\kappa}(\vx, \vx')\psi_j(\vx')p(\vx')p(\vx)d\vx'd\vx$, we have
\begin{equation*}
\small
\begin{aligned}
R_{ij} &= \dot{R}_{ij} - \mu_s \iint \psi_i(\vx)\frac{\delta_{\vx = \vx'}}{\sqrt{p(\vx)p(\vx')}}\psi_j(\vx')p(\vx')p(\vx)d\vx'd\vx \\
&= \dot{R}_{ij} - \mu_s \int \Big( \int \delta_{\vx = \vx'}  \psi_j(\vx')\sqrt{p(\vx')} d\vx'\Big) \psi_i(\vx) \sqrt{p(\vx)} d\vx \\
&= \dot{R}_{ij} - \mu_s \int  \psi_j(\vx) \psi_i(\vx) p(\vx)  d\vx.
\end{aligned}
\end{equation*}
With the constraint $c_j = \int \psi_j(\vx)\psi_j(\vx)p(\vx)d\vx=1$, we have $R_{jj} = \dot{R}_{jj} - \mu_s$, and hence $\max_{\psi_j} R_{jj} \, \textrm{s.t.:}\, c_j = 1 \Leftrightarrow \max_{\psi_j} \dot{R}_{jj} \, \textrm{s.t.:}\, c_j = 1$. 
As a result, we can invoke the same proof as in \citet[Appendix A.1]{deng2022neuralef} to show that solving $\max_{\psi_1} \dot{R}_{11} \, \textrm{s.t.:}\, c_1 = 1$ makes $\psi_1$ converge to $\phi_1$. 

Next, we solve the optimization problem for $\dot{R}_{12}$. 
Under the condition $\psi_1 = \phi_1$, we have
\begin{align*}
\small 
 R_{12} =& \dot{R}_{12}  - \mu_s \int  \psi_1(\vx) \psi_2(\vx) p(\vx)  d\vx \\
= &\iint \psi_1(\vx)\dot{\kappa}(\vx, \vx')\psi_2(\vx')p(\vx')p(\vx)d\vx'd\vx- \mu_s \int  \psi_1(\vx)  \psi_2(\vx) p(\vx)  d\vx \\
= &\int \psi_2(\vx')p(\vx')\int \psi_1(\vx) \dot{\kappa}(\vx, \vx')p(\vx)d\vx d\vx'- \mu_s \int  \psi_1(\vx)  \psi_2(\vx) p(\vx)  d\vx\\
= &\int \psi_2(\vx')p(\vx')\int \phi_1(\vx) \dot{\kappa}(\vx, \vx')p(\vx)d\vx d\vx' - \mu_s \int  \phi_1(\vx)  \psi_2(\vx) p(\vx)  d\vx \\
= &\int \psi_2(\vx')p(\vx')\mu_1 \phi_1(\vx') d\vx' - \mu_s \int  \phi_1(\vx)  \psi_2(\vx) p(\vx)  d\vx \\
= & \mu_1 \langle \phi_1, \psi_2 \rangle - \mu_s \langle \phi_1, \psi_2 \rangle \\
= &  (\mu_1 - \mu_s) \langle \phi_1, \psi_2 \rangle, 
\end{align*}
where $\langle \cdot, \cdot \rangle$ denotes the inner product defined as follows:
\begin{equation*}
\small
\langle \varphi, \varphi' \rangle = \int \varphi(\vx)\varphi'(\vx)p(\vx)d\vx \;\; \text{\; for } \varphi, \varphi' \in L^2(\mathcal{X}, p).
\end{equation*}
Since $\mu_s < 0 < \mu_i, \forall i \in [k-1]$, the constraint $(\mu_1 - \mu_s) \langle \phi_1, \psi_2 \rangle = 0$ is equivalent to $\mu_1\langle \phi_1, \psi_2 \rangle = 0$. 
Namely, the constraint $R_{12} = 0$ can be replaced by $\dot{R}_{12} = 0$. 

We can apply similar analyses to $R_{ij}, \forall j \in [k], i \in [j-1]$ to show that solving the following $k$ asymmetric maximization problems is equivalent to solving the NeuralEF optimization problems for $\kappa$:
\begin{equation*}
\small
\max_{\psi_j} \dot{R}_{jj} \;\;\textrm{s.t.:}\, \dot{R}_{ij} = 0, c_j = 1, \forall j \in [k],~ i\in[j-1],
\end{equation*}
\begin{align*}
\small
    \text{where\;}\dot{R}_{ij} &\triangleq  \iint \psi_i(\vx)\dot{\kappa}(\vx, \vx')\psi_j(\vx')p(\vx')p(\vx)d\vx'd\vx, \\
    c_j &\triangleq \int \psi_j(\vx)\psi_j(\vx)p(\vx)d\vx.
\end{align*}

Slacking the constraints on $\dot{R}_{ij}$ as penalties and 
and implement the constraint on $c_j$ with $L^2$-BN, 
we obtain Theorem~\ref{theory:0}.

\end{proof}

\subsection{Proof of \cref{eq:R-aug}}\label{app:proof3}
\begin{proof}
\begin{equation*}
\small
\begin{aligned}
R &=\mathbb{E}_{p(\vx)}\mathbb{E}_{p(\vx')} \Big[{\kappa}(\vx, \vx') \psi(\vx) \psi(\vx')^\top\Big] \\
&=\mathbb{E}_{p(\vx)}\mathbb{E}_{p(\vx')} \Big[\frac{p(\vx, \vx')}{{p(\vx)p(\vx')}} \psi(\vx) \psi(\vx')^\top\Big] \\
&= \iint p(\vx, \vx') \psi(\vx) \psi(\vx')^\top d\vx d\vx' \\
&= \mathbb{E}_{p(\vx, \vx')} \Big[\psi(\vx) \psi(\vx')^\top\Big] \\
&= \mathbb{E}_{p(\bar{\vx})} \mathbb{E}_{p(\vx | \bar{\vx})p(\vx' | \bar{\vx})} \Big[\psi(\vx) \psi(\vx')^\top\Big] \\
&\approx \frac{1}{b}\sum_{i=1}^b \mathbb{E}_{p(\vx | \bar{\vx}_i)p(\vx' | \bar{\vx}_i)} \Big[\psi(\vx) \psi(\vx')^\top\Big] \\
&\approx \frac{1}{b}\sum_{i=1}^b \psi(\vx_i) \psi(\vx_i^+)^\top,
\end{aligned}
\end{equation*}
where $\bar{\vx}_i$ are samples from $p(\bar{\vx})$ and $\vx_i$ and $\vx_i^+$ are two independent samples from $p(\vx|\bar{\vx}_i)$. 
\end{proof}

\section{More Results}

\subsection{Discussion on the Architecture of the Projector for Visual Representation Learning}
The projector trick is widely used in SSL~\citep{he2020momentum,chen2020simple}. 
We follow the trend and add an MLP projector after the ResNet-50 encoder for representation learning on ImageNet. 
We empirically diagnose the MLP projector and find that, when removing the MLP projector or replacing it with a linear one or removing the BN after the hidden layer, Neural Eigenmap failed to converge or performed poorly.
This finding is partially consistent with the results in some SSL works~\citep[e.g.,][]{chen2021exploring} and we conclude that an MLP projector with BNs in the hidden layer plays an important role in the success of Neural Eigenmap for visual representation learning.

\subsection{Orthogonality of the Learned Eigenfunction Approximations}
While it is challenging to directly verify the orthogonality and accuracy of the learned eigenfunctions for the augmentation kernel, primarily due to the unavailability of data distribution densities used to define this kernel, we conducted an additional experiment on the RBF kernel, which is more amenable to analysis. 
We include the results in \cref{fig:rbf}, where we plot the learned eigenfunction approximations alongside the ground truth. 
We also conducted an orthogonality check. 
As shown, our learned approximations are accurate. %

We consider \cref{fig:2-1,fig:2-2} as indirect evidence supporting that the learned eigenfunction approximations of the augmentation kernel are accurate (such that the features are ordered by eigenvalues).

\begin{figure*}[!htb]
\begin{center}
\begin{subfigure}[b]{0.5\textwidth}
    \includegraphics[width=0.99\linewidth]{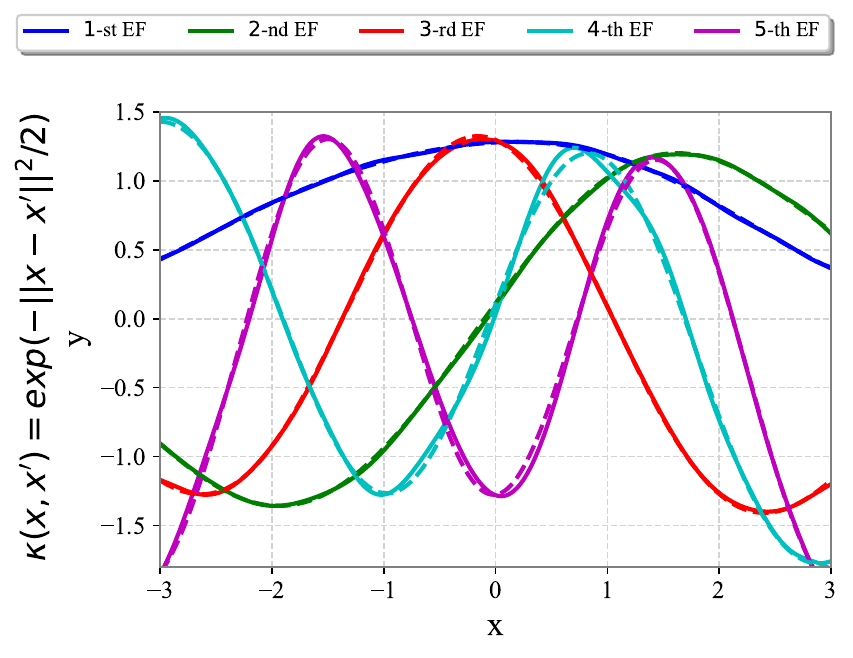}
    \caption{}
\end{subfigure}
\quad
\begin{subfigure}[b]{0.45\textwidth}
    \includegraphics[width=0.99\linewidth]{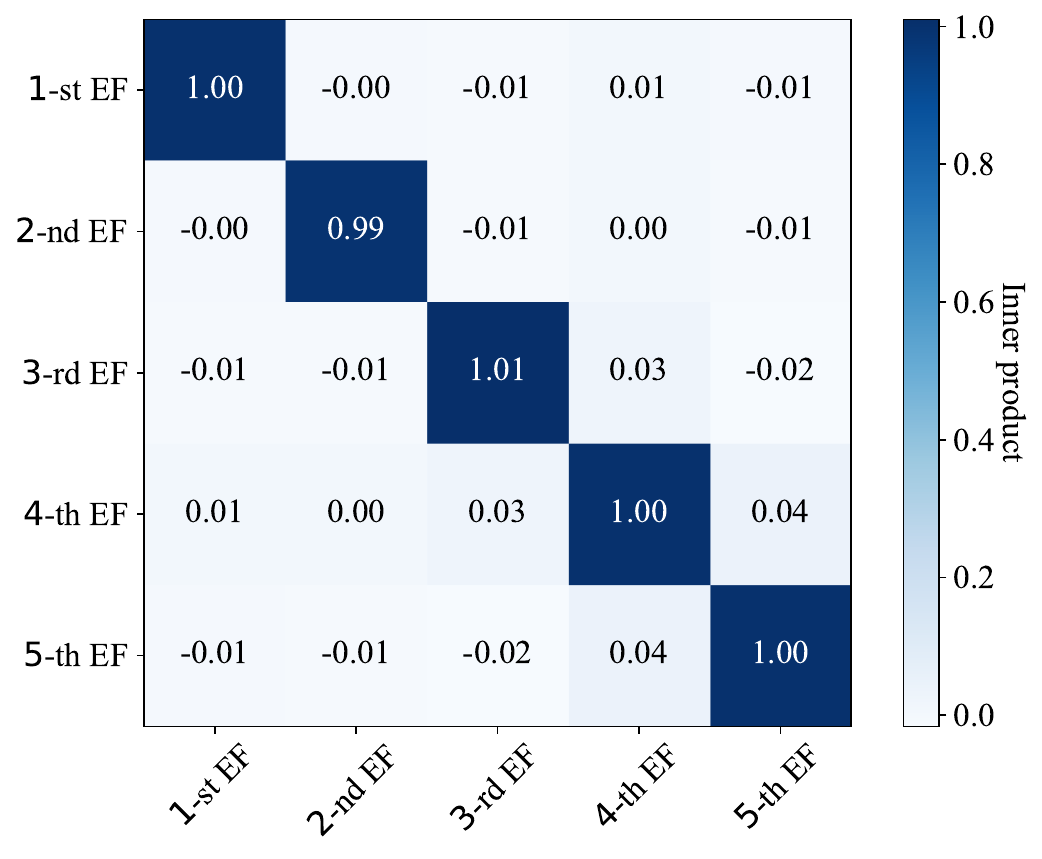}
    \caption{}
\end{subfigure}
\caption{Visualization of the learned neural eigenfunctions (EFs) by our approach for the RBF kernel. The dashed lines represent the ground-truth eigenfunctions. We also provide the inner products between them to verify the orthogonality. We use a 3-layer MLP of width 256 to define the neural eigenfunctions in this experiment. We train on randomly sampled 1024 positions in the range but test on uniformly sampled 2048 positions. }
\label{fig:rbf}
\end{center}
\end{figure*}

\subsection{Compare Neural Eigenmap to SCL+PCA for Image Retrieval}
\label{app:sclpca}

For the considered image retrieval task, it is a straightforward idea to apply PCA to SCL's representations to induce structures.
Then, we can select features according to the index to get adaptive-length codes for image retrieval, as done in Neural Eigenmap. 
In this subsection, we test this proposal. 
Specifically, based on SCL, we compute the principal components of ImageNet training set feature covariances and use the components to project image features into a $k$-dimensional eigenspace for retrieval. 
\cref{table:7} and \cref{table:8} display the performance comparison between this method and our Neural Eigenmap.

\begin{table*}[ht!]
  \centering
 \caption{\small  Comparison of retrieval mAP on NUS-WIDE.}
  \label{table:7}
  \setlength{\tabcolsep}{28pt}
  \begin{tabular}{@{}lcccc@{}}
  \toprule
Code length  & \multicolumn{1}{c}{\specialcell{4}}& \multicolumn{1}{c}{\specialcell{8}}& \multicolumn{1}{c}{\specialcell{16}}& \multicolumn{1}{c}{\specialcell{32}}\\
\midrule
Our&	0.6706&	0.7213&	0.7401	&0.7446\\
SCL+PCA	&0.4845	&0.6679	&0.7368&	0.7579\\
SCL&	0.5439&	0.5866	&0.5909	&0.6207\\
  \bottomrule
   \end{tabular}
\end{table*}

\begin{table*}[ht!]
  \centering
 \caption{\small Comparison of retrieval mAP on COCO.}
 \vspace{-.1cm}
  \label{table:8}
  \setlength{\tabcolsep}{28pt}
  \begin{tabular}{@{}lcccc@{}}
  \toprule
Code length  & \multicolumn{1}{c}{\specialcell{4}}& \multicolumn{1}{c}{\specialcell{8}}& \multicolumn{1}{c}{\specialcell{16}}& \multicolumn{1}{c}{\specialcell{32}}\\
\midrule
Our& 	0.5934	& 0.6420	& 0.6657	& 0.6832\\
SCL+PCA& 	0.4819& 	0.5645	& 0.6360& 	0.6902\\
SCL	& 0.4995	& 0.5714& 	0.5572	& 0.5882\\
  \bottomrule
   \end{tabular}
\end{table*}
As shown, our method outperforms this PCA-based approach for short code lengths, suggesting that combining SCL with PCA is not optimal for recovering principal eigenfunctions of the augmentation kernel.
Besides, we would like to point out that applying PCA as a post-processing step to self-supervised learning methods would substantially increase the computational burden (e.g., it needs to store the principal components apart from the network), which contradicts our paper's goal of avoiding expensive nonparametric approaches.

\subsection{More Visualization of the Learning Eigenfunctions}
\label{app:vis-eigen}

To further explore the visualization of the learned eigenfunctions on ImageNet for the augmentation kernel, we optimize the input starting from random noise to maximize the function output. To enhance interpretability, we incorporate Gaussian blur in the input, facilitating the emergence of patterns recognizable by humans. The optimization results for the first $8$ principal neural eigenfunctions are shown in \cref{eq:vis-by-opt}.
Although precise information may be challenging to discern from these visualizations, we discover that different eigenfunctions exhibit distinct pattern preferences. This finding aligns with our original intentions. 

\begin{figure}[t]
\centering
    \includegraphics[width=0.9\linewidth]{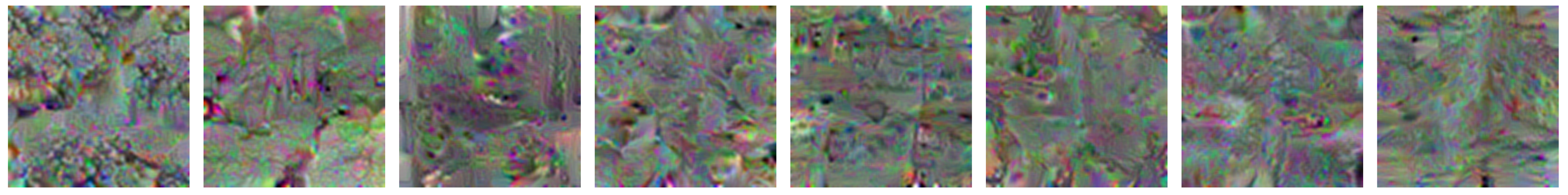}
    \vspace{-.2cm}
    \caption{\small The optimized images that maximize the output of the first $8$ principal neural eigenfunctions. 
    The optimization starts from random noise, and the inputs are augmented by Gaussian blur to encourage the emergence of human-identifiable patterns.}
    \label{eq:vis-by-opt}
\end{figure}

\subsection{Other Visualizations}
\label{app:more-retri}

\begin{figure*}[t]
\centering
    \includegraphics[width=0.25\linewidth]{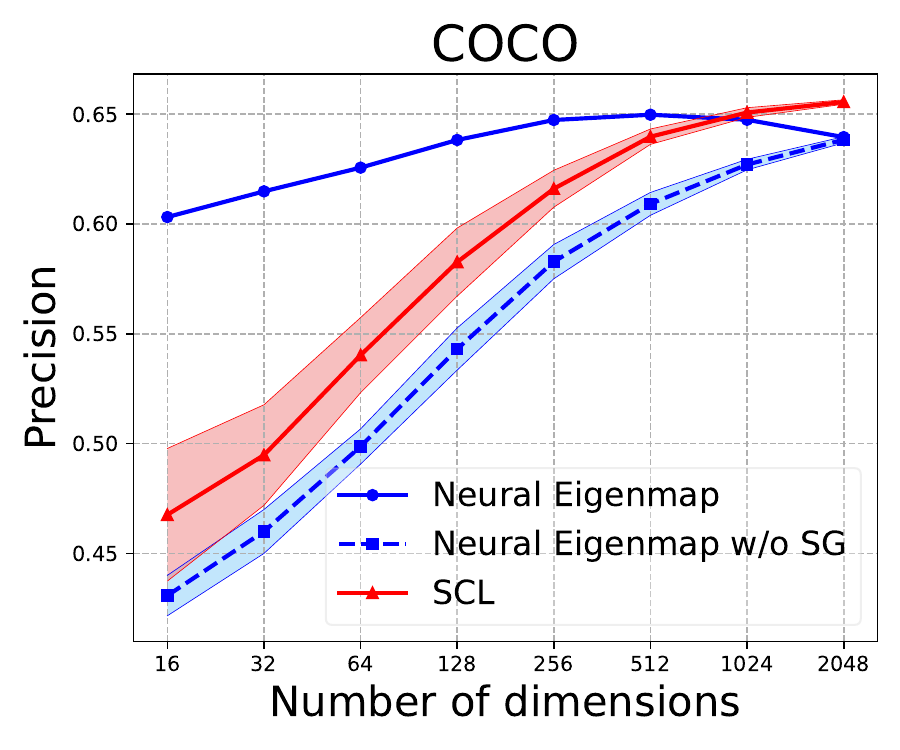}
    \includegraphics[width=0.23\linewidth]{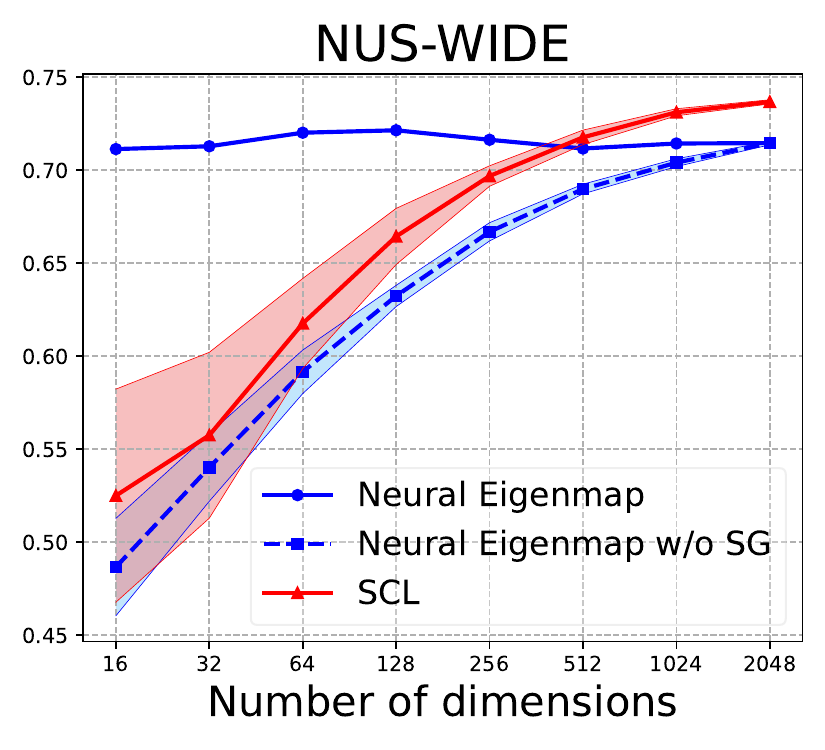}
    \includegraphics[width=0.23\linewidth]{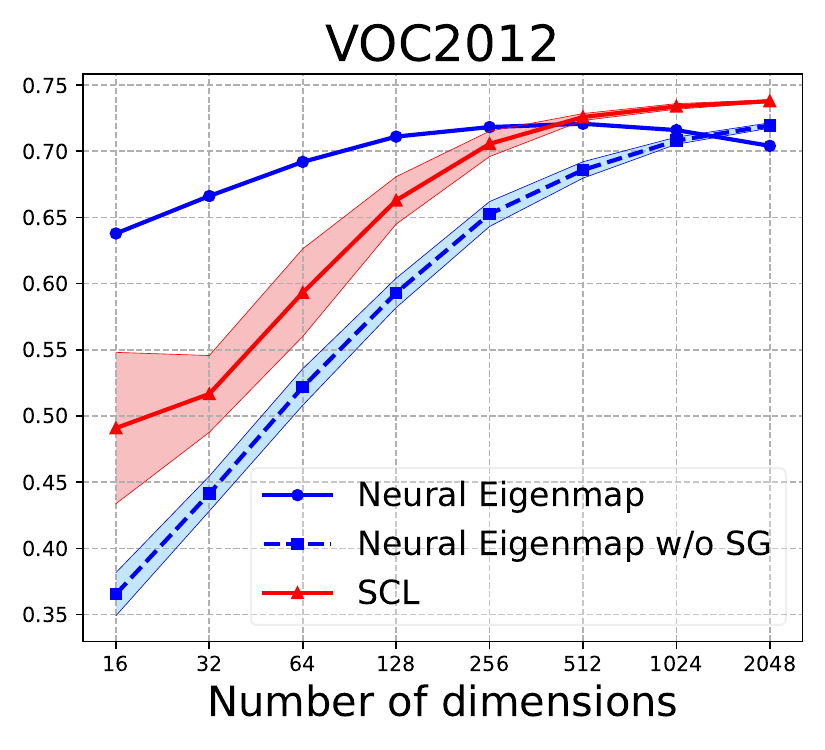}
    \includegraphics[width=0.23\linewidth]{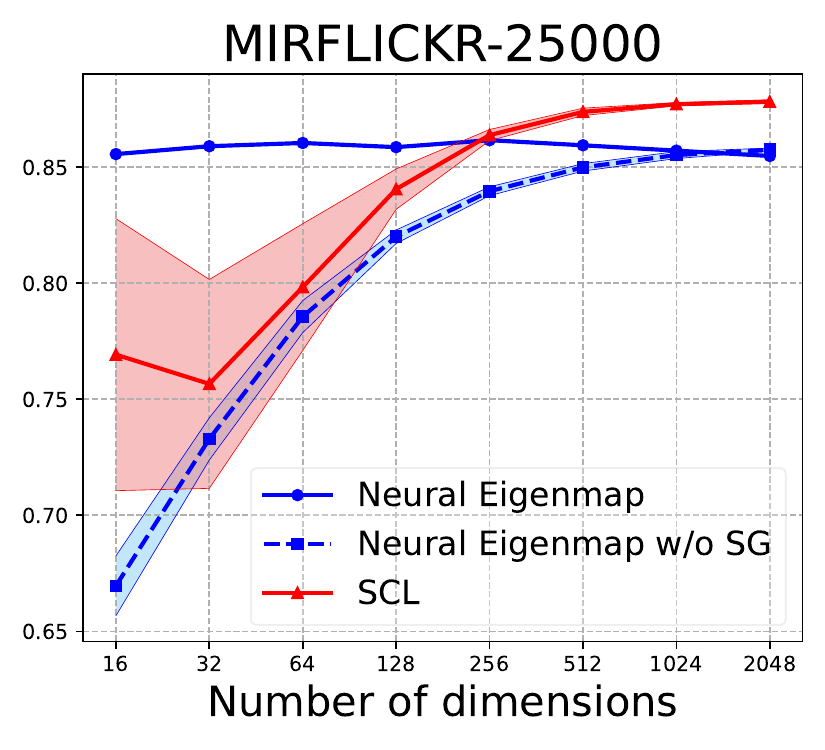}
    \vspace{-1.ex}
    \caption{\small Retrieval precision varies w.r.t. representation dimensionality.}
\vspace{-2.ex}
\label{fig:2-2}
\end{figure*}

\begin{figure}[h]
\centering
    \includegraphics[width=0.25\linewidth]{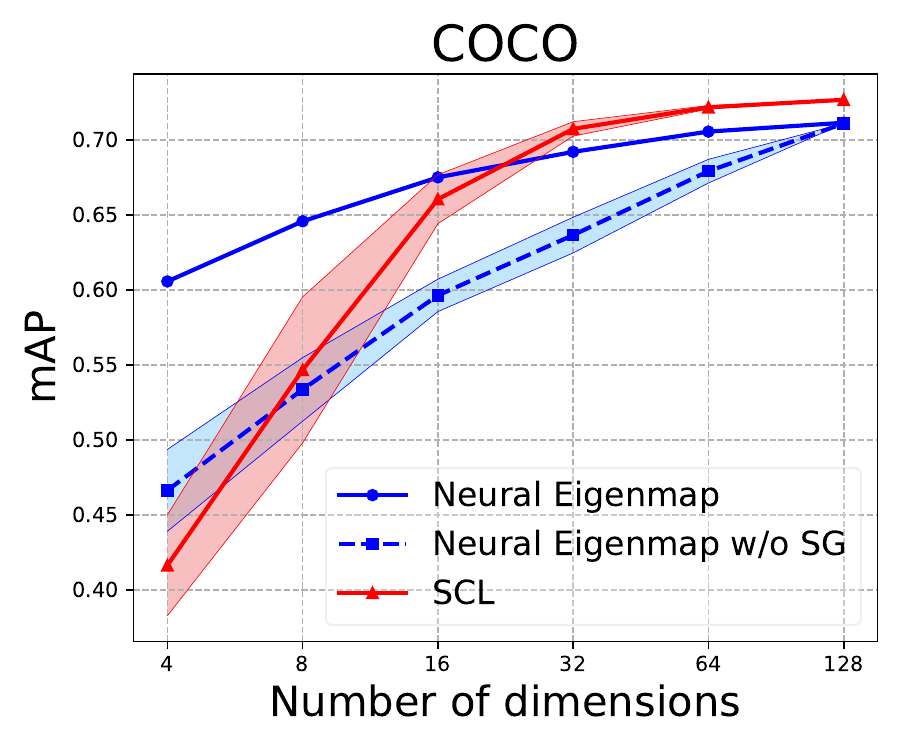}
    \includegraphics[width=0.23\linewidth]{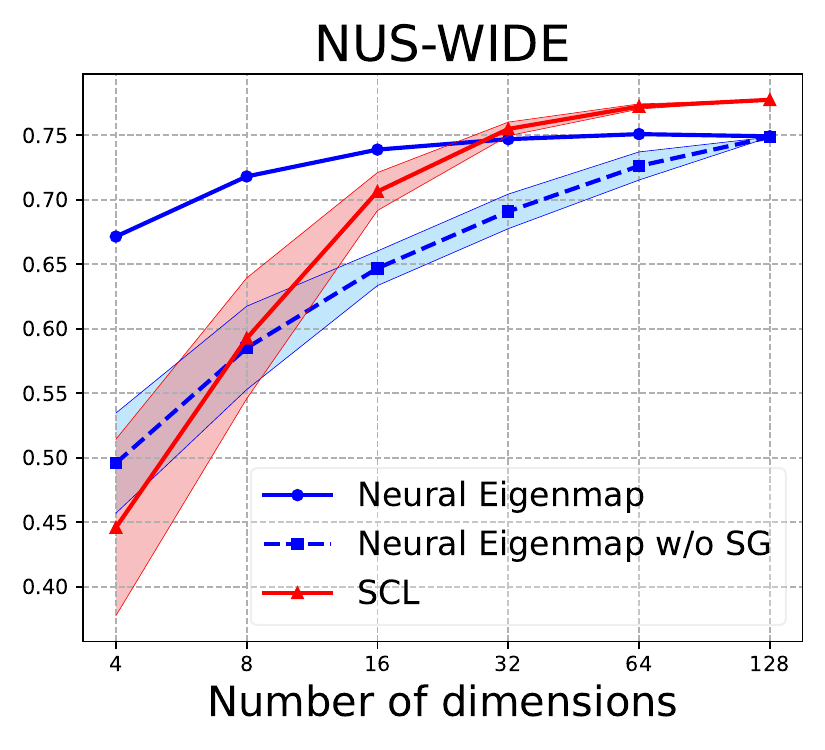}
    \includegraphics[width=0.23\linewidth]{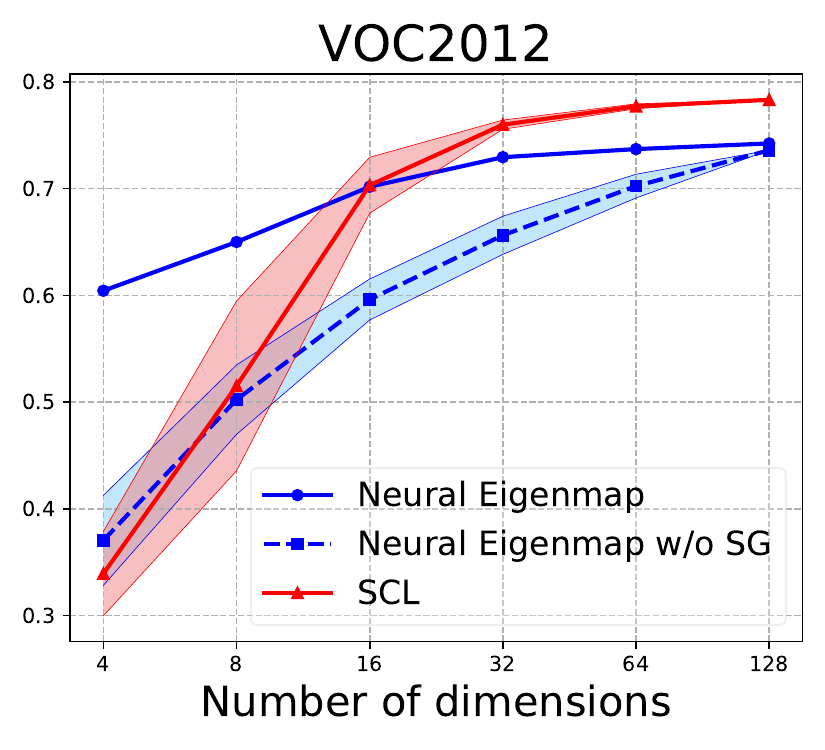}
    \includegraphics[width=0.23\linewidth]{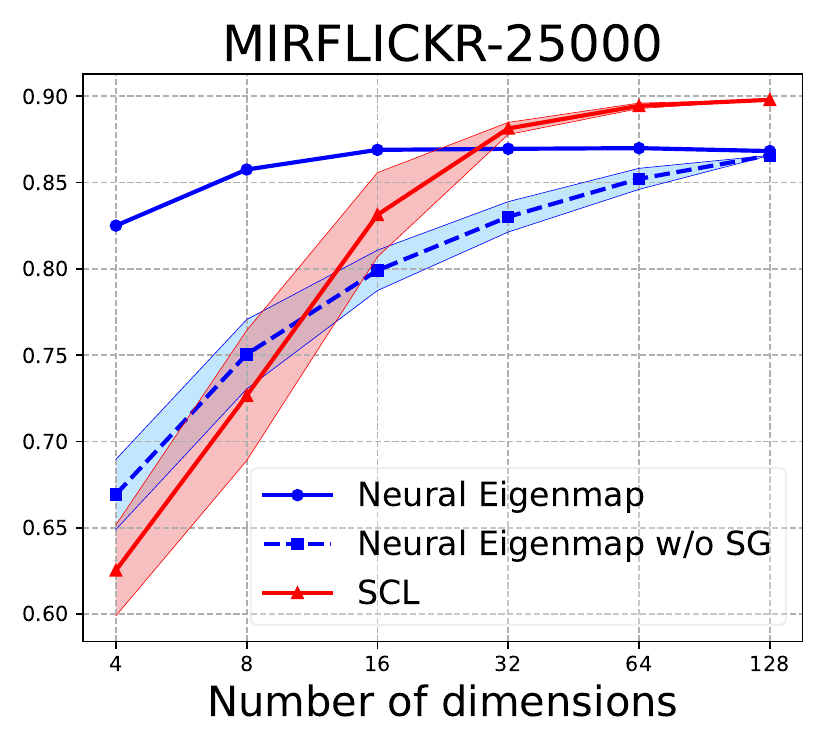}
    \vspace{-.2cm}
    \caption{Retrieval mAP varies w.r.t. representation dimensionality when using a small $k$ (the hidden dimension of the projector is 8192 while the output dimension is 128).}
\end{figure}

\begin{figure}[h]
\centering
    \includegraphics[width=0.25\linewidth]{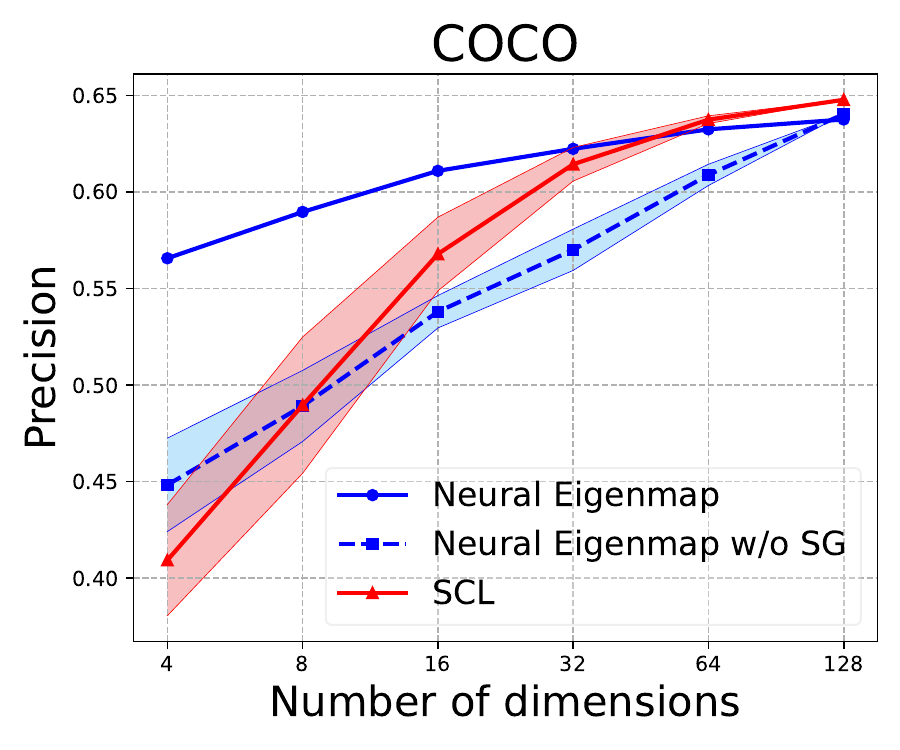}
    \includegraphics[width=0.23\linewidth]{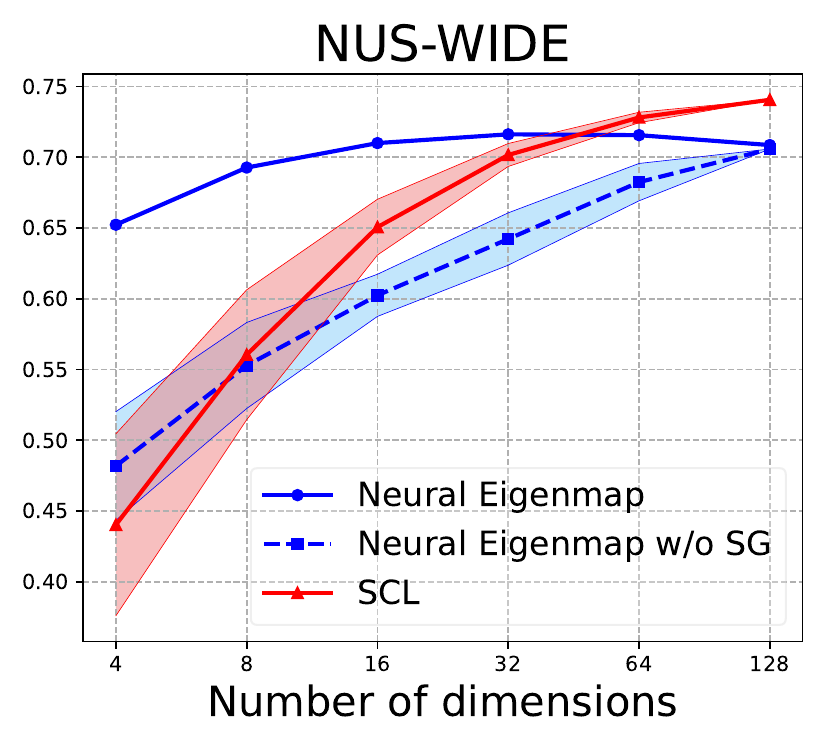}
    \includegraphics[width=0.23\linewidth]{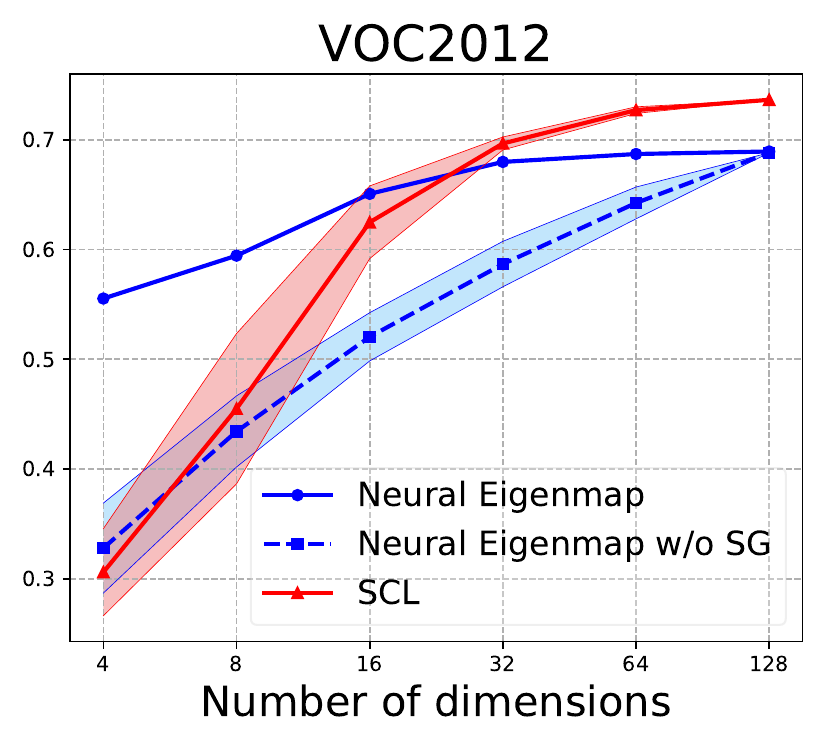}
    \includegraphics[width=0.23\linewidth]{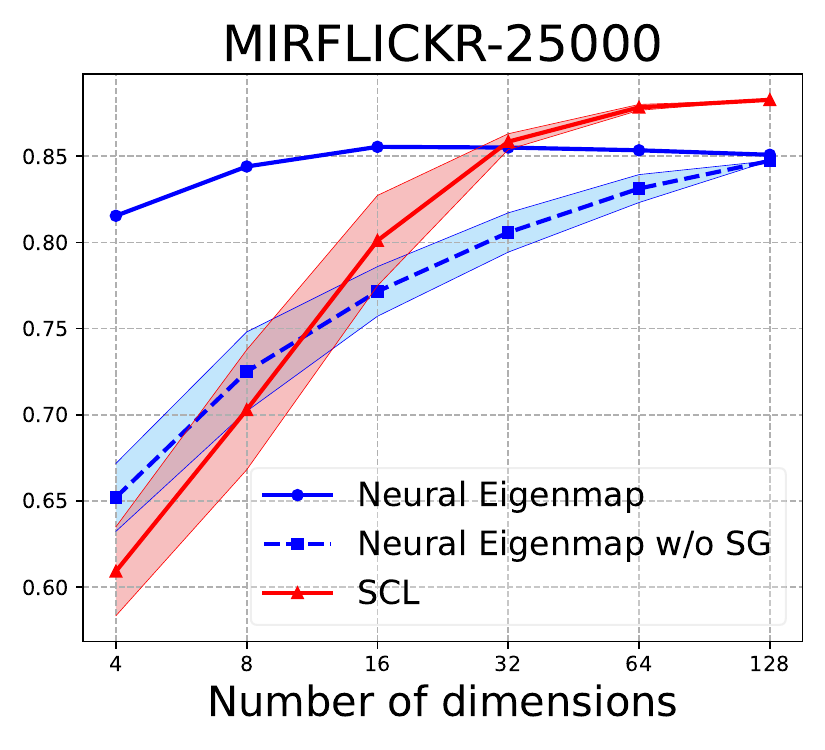}
    \vspace{-.2cm}
    \caption{Retrieval precision varies w.r.t. representation dimensionality when using a small $k$ (the hidden dimension of the projector is 8192 while the output dimension is 128).}
\end{figure}

\begin{figure*}[t]
\centering
\begin{subfigure}[b]{0.6\textwidth}
    \includegraphics[width=0.99\linewidth]{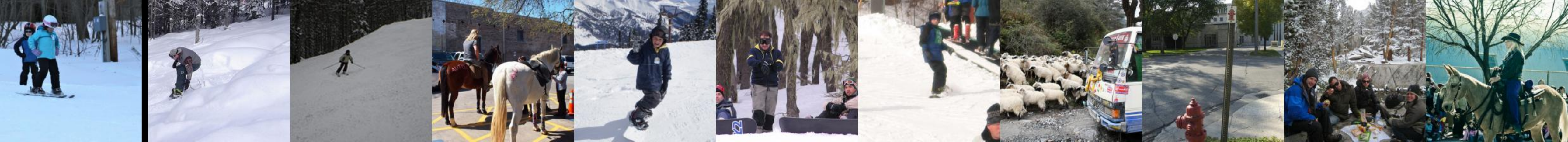}
    \includegraphics[width=0.99\linewidth]{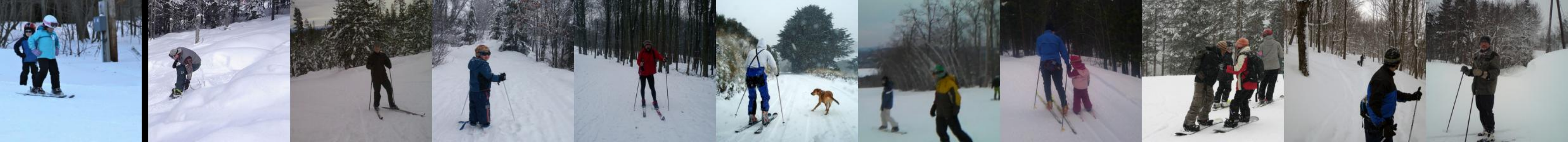}
    \includegraphics[width=0.99\linewidth]{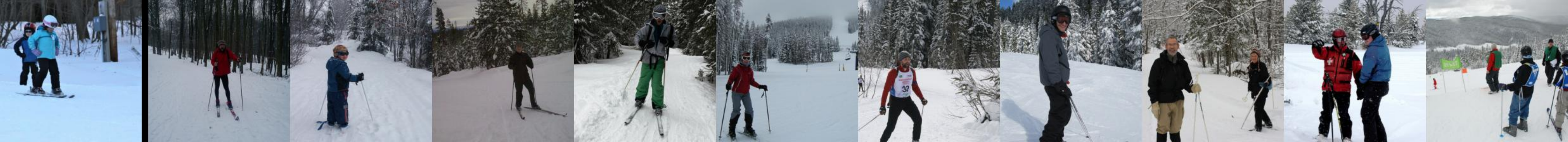}
    \includegraphics[width=0.99\linewidth]{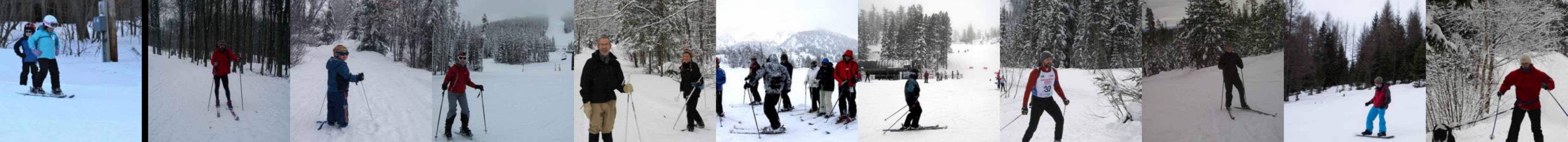}
    \includegraphics[width=0.99\linewidth]{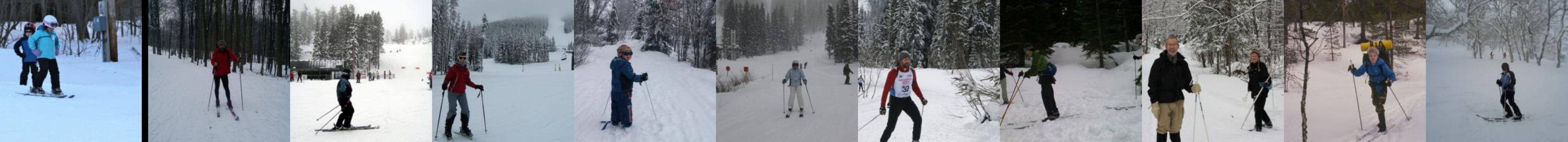}
    \caption{}
\end{subfigure}
\begin{subfigure}[b]{0.6\textwidth}
    \includegraphics[width=0.99\linewidth]{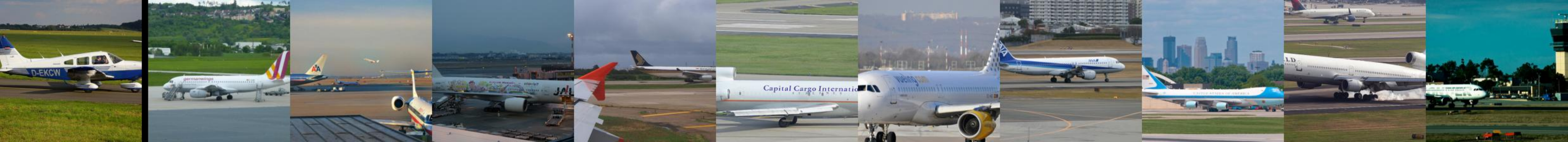}
    \includegraphics[width=0.99\linewidth]{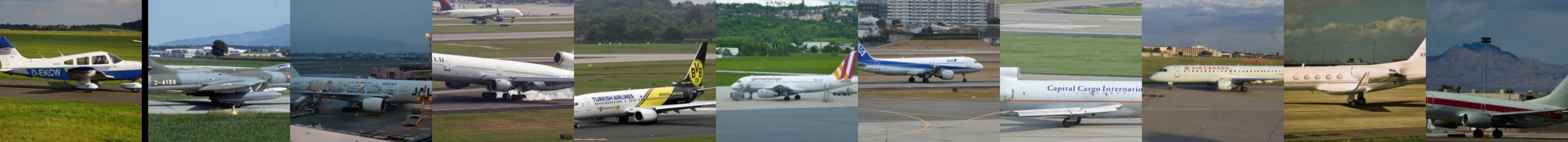}
    \includegraphics[width=0.99\linewidth]{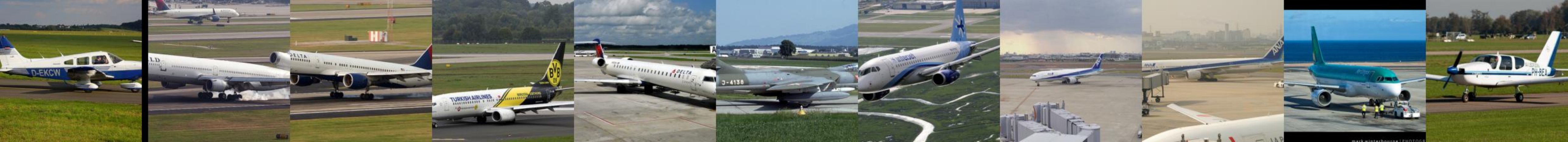}
    \includegraphics[width=0.99\linewidth]{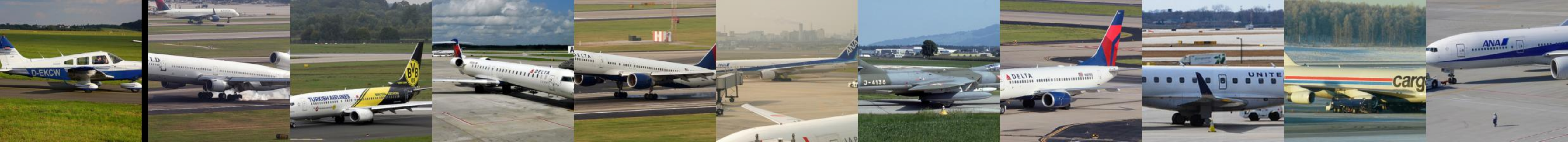}
    \includegraphics[width=0.99\linewidth]{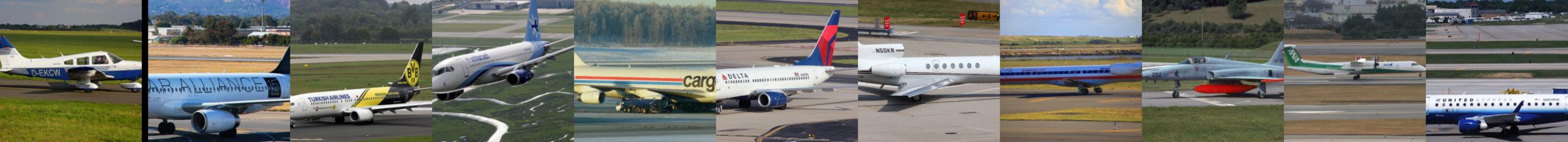}
    \caption{}
\end{subfigure}
\begin{subfigure}[b]{0.6\textwidth}
    \includegraphics[width=0.99\linewidth]{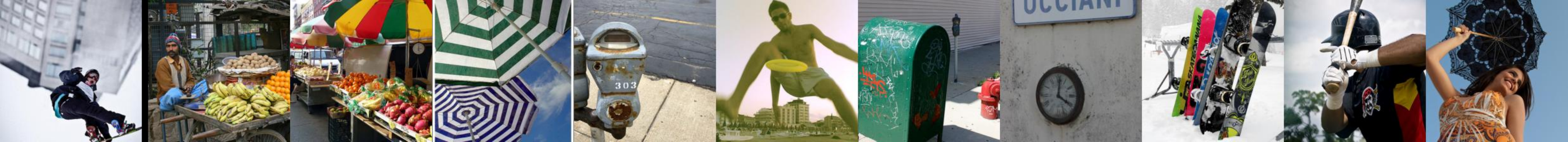}
    \includegraphics[width=0.99\linewidth]{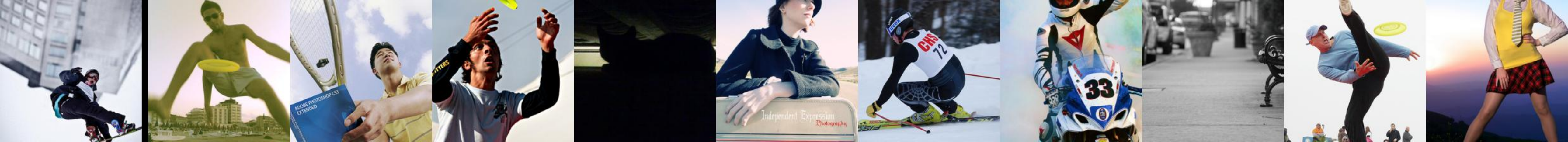}
    \includegraphics[width=0.99\linewidth]{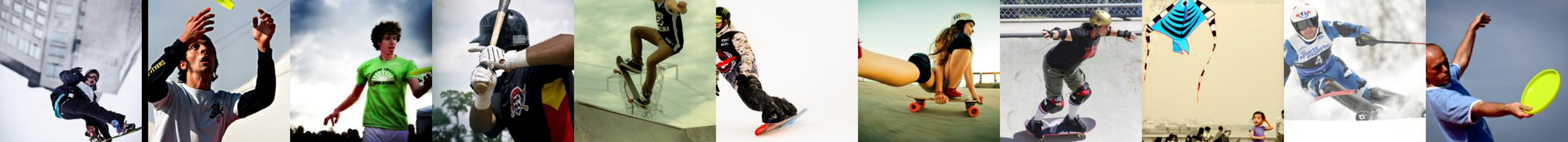}
    \includegraphics[width=0.99\linewidth]{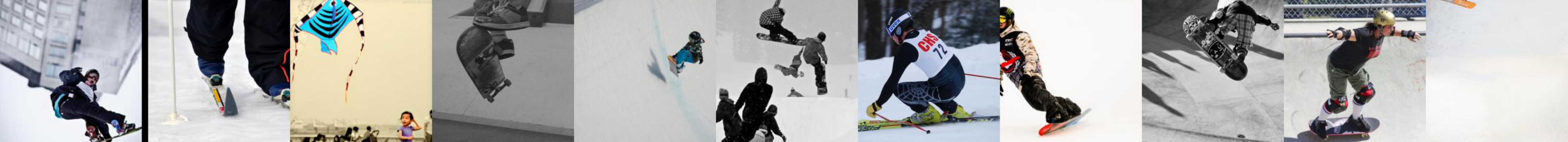}
    \includegraphics[width=0.99\linewidth]{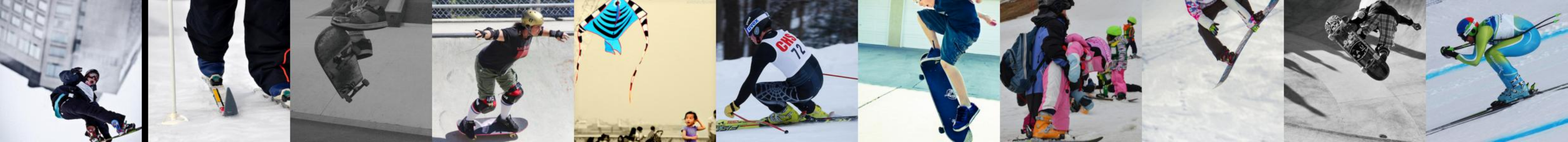}
    \caption{}
\end{subfigure}
\begin{subfigure}[b]{0.6\textwidth}
    \includegraphics[width=0.99\linewidth]{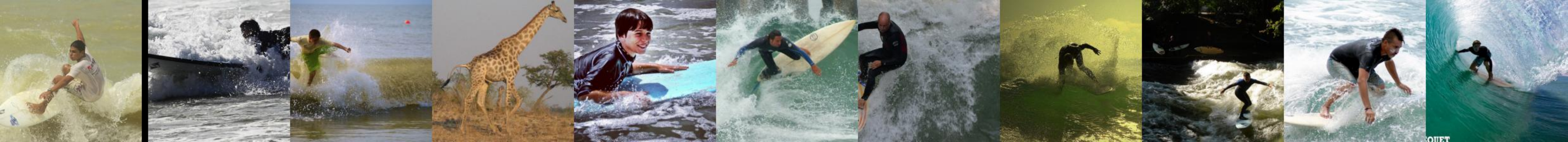}
    \includegraphics[width=0.99\linewidth]{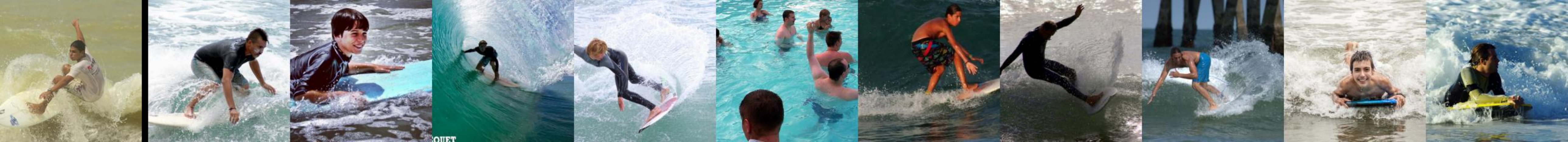}
    \includegraphics[width=0.99\linewidth]{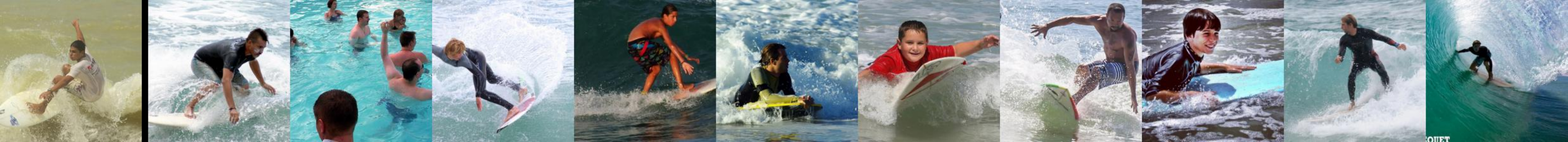}
    \includegraphics[width=0.99\linewidth]{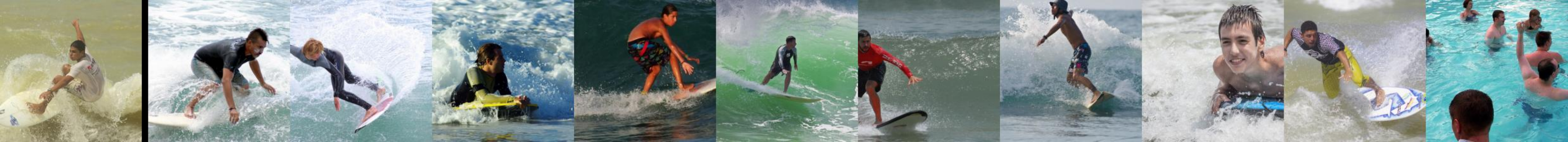}
    \includegraphics[width=0.99\linewidth]{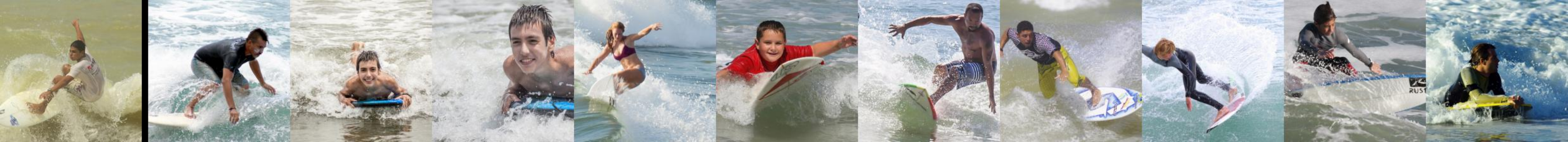}
    \caption{}
\end{subfigure}
\caption{Visualization of retrieval results on COCO with the representations yielded by Neural Eigenmap. The five rows correspond to using the first 4, 8, 16, 32, and 64 entries of the projector outputs for retrieval, respectively. In each row, the first image is a query, and the rest are the top 10 images closest to it over the set.}
\label{fig:a-1}
\end{figure*}

\end{document}